\documentclass[%
 aip,
 amsmath,amssymb,
 reprint,%
]{revtex4-1}

\usepackage{graphicx}
\usepackage{dcolumn}
\usepackage{bm}

\usepackage[utf8]{inputenc}
\usepackage[T1]{fontenc}
\usepackage{mathptmx}
\usepackage{etoolbox}
\usepackage{booktabs}
\usepackage{hyperref}
\usepackage{enumitem}
\usepackage{algorithmic}

\makeatletter
\newcounter{algorithm}
\renewcommand{\thealgorithm}{\arabic{algorithm}}

\newenvironment{algorithm}[1][tb]
  {\par\vspace{1em}
   \hrule
   \vspace{0.5em}
   \refstepcounter{algorithm} 
   \def\caption##1{
       \noindent\textbf{Algorithm \thealgorithm}~~##1 \par
       \vspace{0.5em}
       \hrule
       \vspace{0.5em}
   }
  }
  {\vspace{0.5em}
   \hrule
   \vspace{1em}\par
  }
\makeatother

\newcommand{\textbfit}[1]{\textbf{\textit{#1}}}

\usepackage{amsmath}
\usepackage{amssymb}
\usepackage{mathtools}
\usepackage{amsthm}

\theoremstyle{plain}
\newtheorem{theorem}{Theorem}[section]
\newtheorem{proposition}[theorem]{Proposition}

\theoremstyle{definition}

\theoremstyle{remark}

\makeatletter
\def\@email#1#2{%
 \endgroup
 \patchcmd{\titleblock@produce}
  {\frontmatter@RRAPformat}
  {\frontmatter@RRAPformat{\produce@RRAP{*#1\href{mailto:#2}{#2}}}\frontmatter@RRAPformat}
  {}{}
}%
\makeatother

\begin{document}

\preprint{AIP/123-QED}

\title[Boltzmann Reinforcement Learning for Noise resilience in Analog Ising Machines]{Boltzmann Reinforcement Learning for Noise resilience in Analog Ising Machines}
\author{Aditya Choudhary}
\affiliation{Geomechanics and Geochemistry Department, Sandia National Laboratories, Albuquerque, NM, USA}
\author {Saaketh Desai}
\affiliation{Center for Integrated Nanotechnologies, Sandia National Laboratories, Albuquerque, NM, USA}

\author{Prasad Iyer*}
\affiliation{Center for Integrated Nanotechnologies, Sandia National Laboratories, Albuquerque, NM, USA}
\email{ppadma@sandia.gov}

\date{\today}

\begin{abstract}
Analog Ising machines (AIMs) have emerged as a promising paradigm for combinatorial optimization, utilizing physical dynamics to solve Ising problems with high energy efficiency. However, the performance of traditional optimization and sampling algorithms on these platforms is often limited by inherent measurement noise. We introduce BRAIN (Boltzmann Reinforcement for Analog Ising Networks), a distribution learning framework that utilizes variational reinforcement learning to approximate the Boltzmann distribution. By shifting from state-by-state sampling to aggregating information across multiple noisy measurements, BRAIN is  resilient to Gaussian noise characteristic of AIMs. We evaluate BRAIN across diverse combinatorial topologies, including the Curie-Weiss and 2D nearest-neighbor Ising systems. We find that under realistic 3\% Gaussian measurement noise, BRAIN maintains 98\% ground state fidelity, whereas Markov Chain Monte Carlo (MCMC) methods degrade to 51\% fidelity. Furthermore, BRAIN reaches the MCMC-equivalent solution up to 192x faster under these conditions. BRAIN exhibits $\mathcal{O}(N^{1.55})$ scaling up to 65,536 spins and maintains robustness against severe measurement uncertainty up to 40\%. Beyond ground state optimization, BRAIN accurately captures thermodynamic phase transitions and metastable states, providing a scalable and noise-resilient method for utilizing analog computing architectures in complex optimizations.
\end{abstract}

\maketitle


\section{Introduction}

Analog Ising Machines (AIMs) represent a breakthrough in combinatorial optimization,  utilizing physical dynamics to minimize energy with $10^3-10^6\times$ speedups over digital processors. \cite{al2025programmable,mohseni2022ising,Prabhu:20,hua2025integrated,si2024energy,honjo2021100}
While digital emulators compute energy through sequential logic cycles, AIMs leverage physical dynamics to resolve thousands of interactions in a single computational step. Analog systems offer a fundamental advantage beyond computational speed: they naturally explore the full solution landscape, not just an isolated optima.
To fully harness this potential, algorithms must be engineered for the specific operating reality of high-speed analog cores: \textbf{inherent measurement noise} (3-10\%) and \textbf{sequential sampling interfaces} that prioritize rapid, iterative queries over massive parallel batches \cite{gao2024photonic}.
These hardware constraints create a fundamental mismatch for existing sampling and optimization algorithms. 
Markov Chain Monte Carlo (MCMC) \cite{metropolis1953equation, hastings1970monte} methods are iterative but rely on precise reward (energy, $\Delta E$) differences to maintain detailed balance; when measurement noise ($\sigma$) approaches $\Delta E$, the acceptance criterion becomes effectively random, and convergence fails. Conversely, state-of-the-art deep learning based combinatorial optimization solvers (e.g. diffusion models \cite{sanokowski2024diffusion}, GNNs \cite{schuetz2022combinatorial}) typically assume noiseless measurements and rely on heavy digital training and inference. This introduces latency that exceeds analog reward time, negating AIM's physical speed advantage.

We introduce \textbf{BRAIN (Boltzmann Reinforcement for Analog Ising Networks)}, an algorithm explicitly designed to bridge the gap between noisy, sequential hardware and robust sampling.
BRAIN shifts the paradigm from traversing the landscape state-by-state (MCMC) to variational inference with noisy oracles.
It learns a probabilistic policy parametrized by a factorized Bernoulli distribution to approximate the true Boltzmann distribution.
This approach transforms the  hardware's stochasticity from a liability into a learning signal, using policy gradients to naturally aggregate noisy measurements into stable distributional parameters.
Beyond simple ground state optimization, BRAIN accurately tracks thermodynamic phase transitions and identifies metastable states, maintaining sampling fidelity across temperature regimes and noise levels.
Unlike digital algorithms that may get trapped in brittle local optima, BRAIN navigates through the analog system's intrinsic fluctuations, delivering solutions that are physically robust to noise - a critical feature for applications like drug discovery where stability is vital \cite{taillard2022short}.

\noindent \\
\textbf{Main Contributions:}
\begin{enumerate}[leftmargin=12pt,labelsep=5pt]
    \item \textbf{Noise-Resilient Optimization:} By averaging out measurement noise via gradient accumulation, BRAIN delivers robust performance where traditional methods collapse. The algorithm maintains 98\% ground state fidelity under realistic 3\% noise conditions where standard MCMC degrades to 51\%.
    
    \item \textbf{High-Throughput Efficiency:} To ensure the time-to-solution is driven by the rapid analog oracle rather than digital latency, BRAIN compresses the optimization landscape from the full $\mathcal{O}(2^N)$ configuration space to $\mathcal{O}(N)$ variational parameters. This lightweight parameterization enables rapid updates with negligible digital overhead.
    
    \item \textbf{Scalable Variational Sampling:} Instead of expending computational resources to model the entire loss (energy) landscape of the combinatorial optimization problem, BRAIN uses variational optimization with low-latency models, leveraging the ``mode-seeking'' nature of reverse-KL divergenceoptimization to learn the key features of the Boltzmann distribution at finite temperature. This enables BRAIN to efficiently sample low-energy configurations and scale with $\mathrm{O}(N^{1.55})$ to 65,536 spins and converge 192$\times$ faster than MCMC. 
\end{enumerate} 

\section{Problem Description}
Our goal is to develop an efficient, scalable optimization and sampling algorithm that is resilient to noise inherent to analog Ising machines.

\noindent \\
\textbf{Analog Ising Machines}: Analog Ising machines are characterized as machines that use physical dynamics (optical parametric oscillators, or electric oscillators) to solve combinatorial optimization problems \cite{gao2024photonic} by framing them as solutions to an Ising model \cite{lucas2014ising}.
The Ising model, a cornerstone of statistical physics, is described by an Ising Hamiltonian $H(\mathbf{x}) = -\sum_{i<j} J_{ij} x_i x_j - \sum_i h_i x_i$, where $\mathbf{x} = (x_1, x_2, \ldots, x_N)$ with $x_i \in \{\pm 1\}$ represents a spin configuration, $J_{ij}$ are pairwise coupling strengths, and $h_i$ is the local field.

\noindent \\
\textbf{Objective}: Our objective is to formulate an algorithm that efficiently samples states, and optimizes for states, under the intrinsic noise and stochasticity of AIMs.
This can be stated as learning parameters $\theta$ for a state generator model $q_\theta(x)$, such that $q_\theta(x)$ approximates the Boltzmann distribution $p(x) = \frac{e^{-\beta E(x)}}{\sum_x e^{-\beta E(x)}}$, with inverse temperature $\beta$.
This distribution learning problem, critical to identify metastable states for problem such as drug discovery \cite{taillard2022short}, can be easily extended to a combinatorial optimization problem, by varying $\beta$.
As $\beta \rightarrow \infty$, the configurations sampled by $q_\theta(x)$ would be the ground state solution to the optimization problem.

\noindent \\
\textbf{Constraints and Hardware-Compatibility}: We are specifically interested in designing algorithms to solve the above problem, while respecting the following:
\begin{enumerate}[leftmargin=12pt,labelsep=5pt]
    \item \textit{Scalability}: Our algorithm must scale to high dimensions, where ${\sum_x e^{-\beta E(x)}}$ is intractable. For instance, a  $N = 32 \times 32$ AIM has $2^{32\times32} \approx 10^{308}$ possible states.
    \item \textit{Data-Free, Sample Efficient Learning}: Since the target distribution is intractable, we need an algorithm that can learn $q_\theta(x)$ without training data from $p(x)$, and querying $q_\theta(x)$ as few times as possible.
    \item \textit{Low-Latency Evaluation}: AIMs are characterized by nano-to-micro second timescales to evaluate energies, i.e., scores to proposed solutions. Our algorithm must thus propose candidate solutions at this rate, and not be a computational bottleneck that prevents efficient use of the analog Ising machine.
    \item \textit{Non-Differentiable, Noisy Evaluations}: Unlike conventional combinatorial optimization, AIMs return a non-differentiable, noisy energy (score) $\tilde{E}(\mathbf{x})$ for a configuration $\mathbf{x}$. Each measurement is corrupted by intrinsic Gaussian noise: $\tilde{E}(\mathbf{x}) = E(\mathbf{x})(1 + \eta_x)$ where $E(\mathbf{x}) = H(\mathbf{x})$ is the true energy and $\eta_x \sim \mathcal{N}(0, \sigma^2)$ with $\sigma \in [0.03, 0.10]$, representing the 3--10\% relative measurement error characteristic of current photonic and electronic analog implementations. The noise specific to our AIM is characterized in Appendix \ref{expt_noise_characterization}.
\end{enumerate}

\section{Related Work}
BRAIN connects combinatorial optimization (ground state finding) with neural sampling and distribution learning, specifically addressing the constraints of noisy, non-differentiable analog hardware.

\subsection{Combinatorial Optimization: Ground State Finding}

Recent deep learning approaches for Ising-like combinatorial optimization problems, such as Schuetz et al. \cite{schuetz2022combinatorial} demonstrate that graph neural networks (GNNs) can solve max-cut and Ising problems with near-optimal performance, while Karalias \& Loukas \cite{karalias2020erdos} develop graph neural optimization approaches for problems such as graph clustering.
These methods rely on differentiable relaxations of the Hamiltonian and analytical gradient access.
In contrast, AIMs provide only sequential, black-box energy evaluations $\tilde{E}(x)$.
Similarly, Reinforcement Learning (RL) frameworks like COOL \cite{mills2020rl} and DIRAC \cite{fan2023searching} optimize annealing schedules or find ground states in spin glasses, but these methods assume exact energy measurements, incompatible with AIMs.

\subsection{Distribution Learning: Sampling the Boltzmann Distribution}
Standard MCMC \cite{metropolis1953equation, hastings1970monte} and Parallel Tempering \cite{earl2005parallel} use iterative proposal/rejection to sample the Boltzmann distributions (see Appendix \ref{mcmc_method} for details). While parallel tempering overcomes energy barriers via replica exchange, it requires simultaneous simulations that are difficult to implement on serial analog hardware. 
Furthermore, our experiments in Appendix \ref{parallel_temp} show that the replica exchange mechanism is highly sensitive to measurement noise, with detailed balance breaking down under the 3\% noise levels characteristic of current implementations of analog Ising machines.

Neural sampling algorithms like Torlai \& Melko \cite{torlai2016learning} demonstrate that restricted Boltzmann machines can accurately reproduce thermodynamic observables including energy, magnetization, and specific heat across phase transitions, establishing the viability of neural approaches for equilibrium statistical mechanics.
Autoregressive models \cite{wu2019solving, hibat2021variational} have also shown high-fidelity finite-temperature sampling in complex energy landscapes, modeling $q_\theta(x) = \prod_{i=1}^N q_\theta(x_i | x_{<i})$.
These training methods begin formulating the sampling task as minimizing the reverse KL-divergence between model distribution $q_\theta(x)$ and target Boltzmann distribution $p(x)$.
This is equivalent to minimizing the free energy of the system, as explained in the Methods section.
Our work also leverages this approach, while choosing $q_\theta(x)$ that is compatible with AIMs, avoiding models that have a substantial computational overhead relative to analog energy evaluation times.

More recent work has extended the KL-divergence formalism to models that sample from latent spaces, such as diffusion models \cite{sanokowski2024diffusion, sanokowski2025scalable}.
Sanokowski et al. show that minimizing KL divergence between a diffusion model $q_\theta(x)$ and the Boltzmann distribution, followed by gradual modification of $\beta$, achieves state-of-the-art results on combinatorial optimization problems.
The iterative denoising process requires hundreds of neural network evaluations per candidate $x$, which would introduce digital latency exceeding the nano-to-micro second timescales of analog hardware \cite{prabhu2020accelerating, honjo2021100}.
Additionally, the training procedures rely on exact energy evaluations for score matching and loss computation.
These limitations also apply to neural samplers designed using normalizing flow based models \cite{noe2019boltzmann, li2025deep} or designed by treating sampling as a continuous-time transport problem \cite{holderrieth2025leaps, ou2025discrete}, learning rate matrices of Continuous-Time Markov Chains using locally equivariant networks.

\subsection{Algorithms for Noisy Analog Ising Systems}

Limited prior work explicitly addresses noisy analog Ising optimization. Pierangeli et al. \cite{pierangeli2020noise} demonstrated that tuning detection noise can actually increase success probabilities in spatial-photonic systems, suggesting noise can be an algorithmic resource. 
Reifenstein et al. \cite{Reifenstein2021CIM} analyze coherent Ising machines with optical error-correction and feedback, highlighting all-to-all programmability and chaotic search dynamics.

\section{Methods}
We use policy gradient methods to solve our variational learning task in a hardware-in-the-loop setting, learning the Boltzmann distribution directly from noisy, non-differentiable sequential measurements.

\noindent \\
\textbf{Learning a Distribution by Minimizing KL Divergence}:
We use variational inference to learn $q_\theta(x)$ directly by minimizing the KL divergence between $q_\theta(x)$ and $p(x)$.
\begin{equation}
\theta^* = \arg \min_\theta 
\mathrm{KL}(q_\theta \,\|\, p) = \sum_x q_\theta(x) \log \frac{q_\theta(x)}{p(x)}
\end{equation}
For $p(x)$ equal to the Boltzmann distribution, using standard definitions, we can write:
\begin{equation}
\label{KL_equation}
\mathrm{KL}(q_\theta \,\|\, p) = -H(q_\theta(x)) + \beta \, \mathbb{E}_{q_\theta}[E(x)] + \log Z
\end{equation}
where $H(q_\theta)$ is the entropy of the model and $Z = \sum_x e^{-\beta E(x)}$ is the partition function.
Since $\rm Z$ is a summation over all possible system states, it is a constant (which we cannot evaluate) that does not affect our optimization to learn $q_\theta(x)$.
Therefore, an equivalent objective function is:
\begin{equation}
\label{loss_function_equation}
\theta^* = \arg \min_\theta L(\theta) = -H(q_\theta(x)) + \beta \, \mathbb{E}_{q_\theta}[E(x)]
\end{equation}

\noindent \\
\textbf{Relating KL Divergence to a Helmholtz Free Energy}: The objective above has a natural interpretation in statistical mechanics, as the minimization of the Helmholtz free energy (F) of an equivalent physical system that is kept at a constant inverse temperature $\beta$.
\begin{equation}
F = \mathbb{E}_{q_\theta}[E(x)] - T H(q_\theta(x)) = U - TS
\end{equation}
where $U$ is the expected internal energy, $S$ the entropy, and $T = 1/\beta$ the temperature.
Minimizing the KL divergence is therefore equivalent to minimizing the free energy, and the equilibrium distribution of configurations is given by the Boltzmann distribution.
This minimization of free energy, or the ``reverse'' KL divergence, is also the starting point for other variational approaches to learning intractable distributions \cite{Torlai2016Thermo, wu2019solving, sanokowski2024diffusion, li2025deep}, as mentioned in Related Work.

\textbf{Learning with Non-Differentiable Rewards}:
One method of minimizing the free energy in Eq. \ref{loss_function_equation} is to use the score-function trick \cite{williams1992simple}, also known as the REINFORCE algorithm, a variant of policy-gradient reinforcement learning.
This method allows us to estimate gradients of an expectation over a parameterized distribution without requiring differentiability of the reward (here, $E(x)$).
This method estimates the gradient of the loss function (the free energy) as:
\begin{equation}
    \nabla_\theta L = \mathbb{E}_q[\beta E(x)\nabla_\theta \log (q_\theta(x))] - \nabla_\theta H(q_\theta(x))
\end{equation}
A common technique to reduce the variance of the REINFORCE gradient estimator is to subtract a baseline from the reward $E(x)$.

\noindent \\
\textbf{Hardware-Compatible Distributions}: To operate effectively within the AIM, the variational family $q_\theta(x)$ must satisfy two strict hardware constraints: (1) Noise robustness - The model must be learnable via high-variance gradient estimates derived from the noisy energy measurements (2) Low latency - The inference step must match the nano to micro second timescale of the reward inference on the AIM.

Driven by these constraints, we avoid the use of deep neural samplers (autoregressive models, diffusion models), and instead choose $q_\theta(x)$ to be a fully-factorized Bernoulli distribution over individual spins.
\begin{equation}
    q_\theta(x) = \prod_{j=1}^{N} m_j^{\frac{1+x_j}{2}} (1-m_j)^{\frac{1-x_j}{2}}, \quad m_j \in [0,1]    
\end{equation}
where each spin generator is a Bernoulli distribution with local variational parameters $m_j$.

This formulation converts the problem of generating an $\sqrt{N} \times \sqrt{N}$ spin configuration into generating $N$ local variational parameters $\{ m_j \}$, which comprise the parameter vector $\theta$.  
For this choice, the gradient of the loss with respect to each $m_j$ can be computed analytically, see equation below, and see Appendix \ref{gradient_expression_bernoulli} for details.
Appendix \ref{gradient_expression_gmm} also provides details for gradient expressions for other simple $q_\theta(x)$ such as Gaussian mixtures.
\begin{equation}
\nabla_{m_j} L = \mathbb{E}_{q_\theta} \Big[ \beta E(x) \Big( \frac{x_j - (2 m_j - 1)}{2 m_j (1 - m_j)}\Big)\Big] + \log \Big(\frac{m_j}{1 - m_j}\Big)
\end{equation}

\noindent \\
\textbf{Learning with Noisy Rewards}:
We minimize the free energy in Eq. \ref{loss_function_equation} with REINFORCE, using the noisy energy evaluation as the reward.
Specifically, we define the gradient of our loss function as:
\begin{equation}
    \nabla_\theta L \approx \frac{1}{S} \sum_{s=1}^{S} (r(x_s) - b) \nabla_\theta \log q_\theta(x_s) - \nabla_\theta H(q_\theta(x))
\end{equation}
where $r = \beta \tilde E(x_s)$, and where $b$ is a baseline defined as the average over all sampled configurations in a batch of size $S$.
 
We show that REINFORCE with baseline subtraction and noisy rewards asymptotically decreases the variance (as in the noiseless case), with the exact change in variance:
\begin{equation}
\Delta \mathrm{Var}_{\mathrm{noise}} = \frac{\sigma^2 \beta^2}{s^3} \left( \sum_{i=1}^s a_i \right) \left( \sum_{j=1}^s a_j \left[ 2 E(x_j)^2 - \overline{E^2} \right] \right)
\end{equation}
where $a_i = \nabla_\theta \log q_\theta(x_i)$ and $\overline{E^2} =\frac{1}{s}\sum_{k=1}^s E(x_k)^2$
see Proposition \ref{prop:noise_variance} in the Appendix.

Our complete algorithm is presented in Algorithm \ref{alg:score_function_training}.

\begin{algorithm}[tb]
\caption{BRAIN - Boltzmann Reinforcement for Analog Ising Networks}
\label{alg:score_function_training}
\noindent \textbf{Input}: Parameterized state generator $q_\theta(x)$, inverse temperature $\beta$, noise level $\sigma$ \\
\textbf{Parameters}: Samples per batch $S$, learning rate $\eta$ \\
\textbf{Output}: Optimal $\theta^*$ for $q_\theta(x)$
\begin{algorithmic}[1]
\FOR{each gradient step}
    \STATE Sample $S$ configurations $x_1, \dots, x_S \sim q_\theta(x)$
    \FOR{each sampled configuration $x_s$}
        \STATE Measure noisy energy: $\tilde{E}(x_s)$ and compute reward: $r(x_s) = \beta \tilde{E}(x_s)$
    \ENDFOR
    \STATE Compute baseline: $b = \frac{1}{S} \sum_{s=1}^{S} r(x_s)$
    \STATE Compute batch gradient estimate:
    \[
        \nabla_\theta L \approx \frac{1}{S} \sum_{s=1}^{S} (r(x_s) - b) \nabla_\theta \log q_\theta(x_s) - \nabla_\theta H(q_\theta(x))
    \]
    \STATE Update parameters: $\theta \gets \theta - \eta \nabla_\theta L$
\ENDFOR
\STATE \textbf{return} $\theta^*$
\end{algorithmic}
\end{algorithm}

\noindent \\
\textbf{Evaluation Metrics}

We evaluate BRAIN using a multi-dimensional suite of metrics that balance optimization quality, computational efficiency, and sampling fidelity. \textbfit{Ground State Fidelity} measures the success rate in identifying ground states, while \textbfit{Time to Solution} provides a hardware-agnostic assessment of sample efficiency via the total number of energy evaluations. For a fair comparison, we assume deployment on an AIM, where parallel evaluation of multiple spin states is prohibitive. We quantify computational overhead through \textbfit{Training Latency} (time per iteration and parameter count) and assess sampling diversity via the \textbfit{Effective Sample Size (ESS)} at critical temperature ($\beta = 0.4407$). This ESS metric, defined as:
\begin{equation}
\mathrm{ESS} = \frac{\left( \sum_{i=1}^N \exp(A_i) \right)^2}{\sum_{i=1}^N \exp(2 A_i)}
\end{equation}
where $A_i = -\beta E(x_i) - \log q(x_i)$, distinguishes models that explore equilibrium spaces effectively from those prone to weight concentration. Finally, we evaluate \textbfit{Scalability and Noise Resilience} by testing performance across system sizes up to $10^4$ spins and measuring resilience to noise ($\sigma$), noting that architectural constraints in current state-of-the-art solvers \cite{sanokowski2025scalable,fan2023searching,wu2019solving,holderrieth2025leaps,ou2025discrete} often preclude direct noise-resilience comparisons.

\section{Experiments}

We benchmark BRAIN on four problem  classes with progressively increasing energy-landscape complexity and high-dimensional interactions.

\subsection{Benchmarking on Low-Dimensional Energy Landscapes}

The first benchmark is a standard double well energy landscape $E(x) = A(x^2 - x^2_0)^2 + Bx$, where $A,B,x_0$ are constants determining the barrier of the double-well, the offset between the two energy minima, and the location of the minima respectively (see Figure \ref{test_case_noisy} (a)).
Also shown is the corresponding probability density $p(x) = e^{-\beta E(x)}/{Z}$, for two different inverse temperature $\beta$.
The addition of 10\% noise to the energy function produces a highly irregular energy landscape away from local minima.
This structural noise results in increased variability in the probability distribution $p(x)$, particularly evident in the high-temperature regime.
At low temperatures ($\beta = 0.3$), we see that the noise does not affect the likelihood of observing states significantly.
Note that for this problem, the state generator $q_\theta(x)$ was parameterized as a Gaussian Mixture Model (GMM) with two components, rather than the factorized Bernoulli distribution used for the discrete Ising experiments.
This allows the variational approximation to capture the bimodal nature of the continuous target distribution $p(x)$.
Figure \ref{test_case_noisy} (b) compares BRAIN and MCMC in their ability to recover $p(x)$ at both temperatures, where we find that both BRAIN and MCMC successfully recover $p(x)$ at high temperatures.
However, at low temperatures, MCMC oversamples the low probability state, while BRAIN captures $p(x)$ correctly.

The second benchmark is a one-dimensional six-spin case, as used in \cite{torlai2016learning}.
This benchmark allows us to measure performance on an Ising system where the partition function can be evaluated exactly, allowing us to visualize and compare $p(x)$ for BRAIN and MCMC, Figure \ref{test_case_noisy} (c-d).
We find that both BRAIN and MCMC adequately sample $p(x)$ at a low temperature where two states are more likely than other states.
Note that $p(x)$ is symmetric with respect to a sign inversion operation here, i.e., flipping the sign of all spins results in a state with equal $p(x)$.
Therefore, when BRAIN and MCMC recover one of the two likely states, they automatically recover the other likely state as well, via a sign inversion.
The noiseless versions of these benchmarks are shown in Appendix \ref{noiseless_benchmarks_lowd}.

\begin{figure}[h]
\begin{center}
\includegraphics[width=0.95\linewidth]{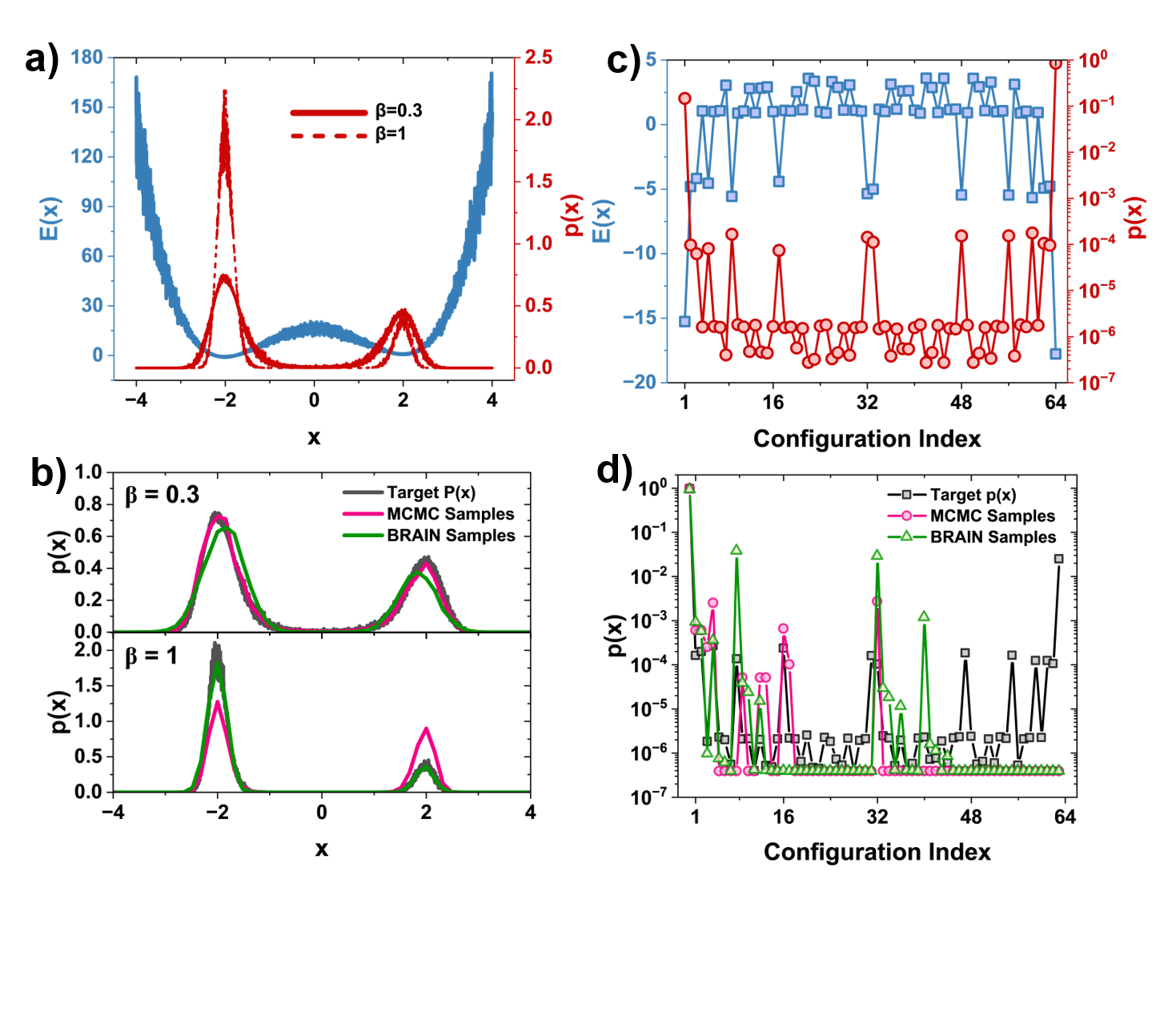}
\end{center}
\vspace{-6mm}
\caption{(a) A noisy double-well energy landscape $E(x)$, and associated probability $p(x)$ of observing state $x$. (b) Comparing BRAIN and MCMC to the ground truth $p(x)$ for two different temperatures. We find that both MCMC and brain perform adequately at high temperature but MCMC performs worse at low temperatures. (c) The energy landscape $E(x)$ and associated probability $p(x)$ for a one-dimensional six spin system. (d) Comparing BRAIN and MCMC to ground truth at a single, low temperature - we find both algorithms to be equally adequate at representing $p(x)$.}
\label{test_case_noisy}
\end{figure}

\subsection{Evaluating BRAIN on High-Dimensional Analog Ising Networks}
\subsubsection{Noisy Curie-Weiss Ising Hamiltonian}
Moving to higher-dimensional problems, we evaluate BRAIN using the 2D Curie-Weiss Ising Hamiltonian, a canonical Hamiltonian for understanding collective magnetic phenomena and phase transitions \cite{kochmanski2013curie}.
The Curie-Weiss Hamiltonian is given by: $H = -\frac{1}{2N}\sum_{i\neq j} J_{ij}\sigma_i\sigma_j $ where $\sigma_i$, $\sigma_j$, N, and $J_{ij} (= 1)$ represent spin $i$, spin $j$, total number of spins, and the spin-spin coupling strength, respectively.
This Hamiltonian represents an all-to-all coupled network topology, see Figure \ref{CW_Ising_Figure}(a), where every spin interacts with every other spin regardless of spatial distance. This fully connected mean-field model exhibits a well-characterized second-order phase transition at the critical temperature $T_c \approx J$, making it a standard benchmark for both optimization and sampling algorithms. 
\begin{figure*}[t!]
\begin{center}
\includegraphics[width=0.98\linewidth]{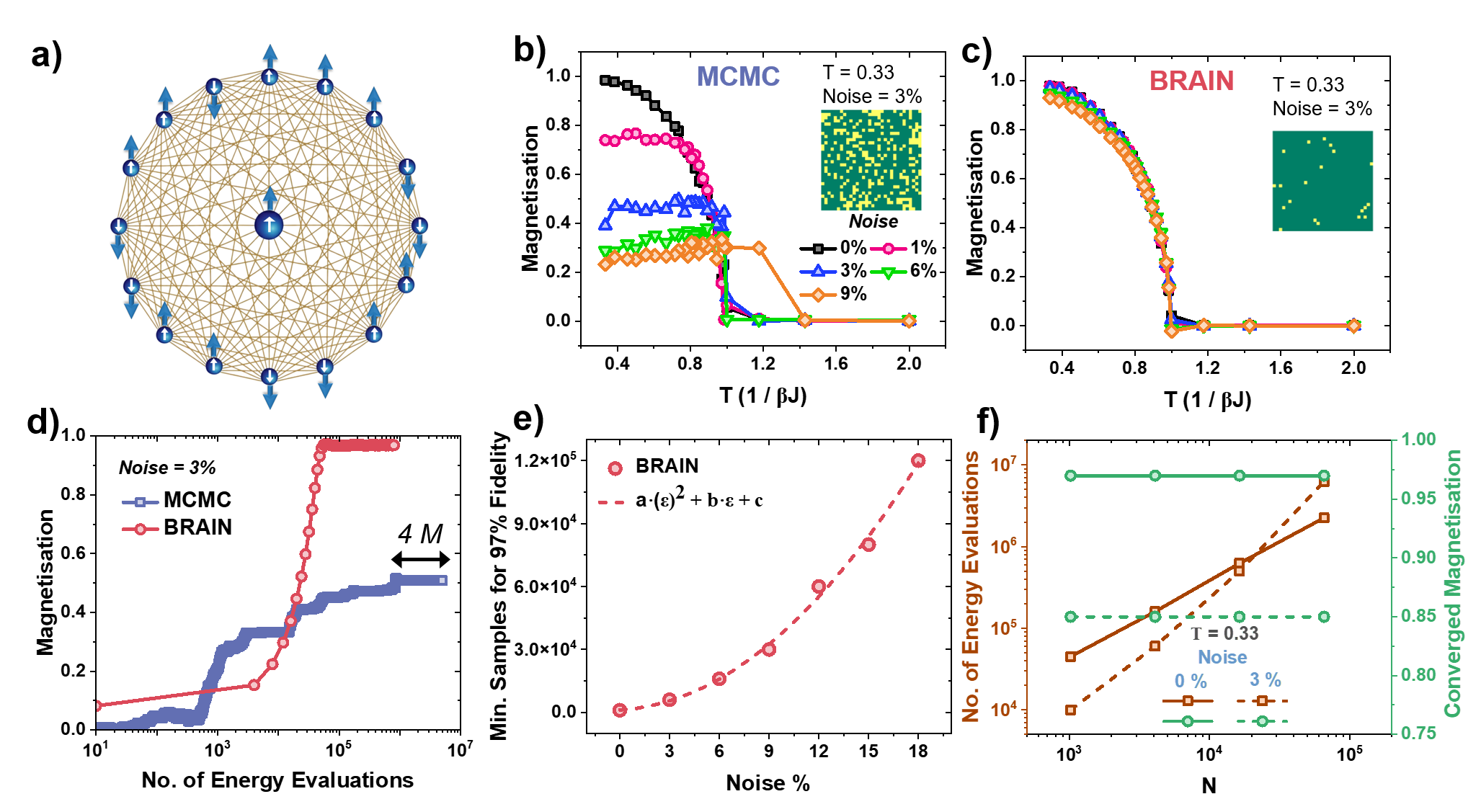}
\end{center}
\caption{(a) All-to-all coupled Ising network representing the Curie-Weiss Hamiltonian where every spin on the network is coupled to every other spin independent of the distance between the spins. Temperature-dependent magnetization profile for (b) MCMC and (c) BRAIN under various noise levels. The inset in (b) and (c) represents the ground-state spin state at T = 0.33 with 3\% noise. (d) MCMC and BRAIN convergence at T=0.33 under 3\% noise, shown in terms of the number of energy evaluations required to reach optimal solutions. (e) Minimum number of samples needed for BRAIN to maintain solution fidelity as noise increases. (f) Scalability analysis across system sizes (N) ranging from 1,024 to 65,536 spins. The left axis displays the number of energy evaluations at T=0.33 under both noiseless and 3\% noise conditions, while the right axis shows the corresponding magnetization values.}
\label{CW_Ising_Figure}
\end{figure*}

Under noisy evaluations, for a 32 $\times$ 32 system, MCMC (Figure \ref{CW_Ising_Figure}(b)) fails catastrophically, while BRAIN (Figure \ref{CW_Ising_Figure}(c)) demonstrates remarkable noise resilience in capturing the temperature-dependent magnetization profiles.
In the noiseless case (see Appendix \ref{noiseless_benchmarks_ising}), both algorithms accurately capture the thermodynamic landscape and converge to the theoretical ground state magnetization $|M| = 1$. 

However, even with minimal (1\%) noise, MCMC achieves only $|M| = 0.76$ instead of the theoretical $|M| = 1$ at low temperatures, see Figure \ref{CW_Ising_Figure}(b).
This failure arises because measurement noise approaches or exceeds the energy differences between competing configurations, causing the Metropolis acceptance probability to become effectively random, thereby violating the detailed balance condition required for correct Boltzmann sampling.
\cite{metropolis1953equation, hastings1970monte}.
This breakdown extends to advanced MCMC variants like parallel tempering, which enhances exploration through replica exchange, failing under  3\% analog noise levels (Appendix~\ref{parallel_temp}), confirming that the detailed balance mechanism is fundamentally incompatible with noisy energy evaluations.

Under realistic 3\% noise conditions, representative of current AIMs, BRAIN achieves $|M| = 0.98$ compared to MCMC's $|M| = 0.51$, representing 98\% versus 51\% \textit{fidelity} to the theoretical optimum. Representative spin configurations (green and yellow represent spin up and spin down states, respectively) at low temperature ($T = 0.33$) under 3\% noise are shown as insets in Figure~\ref{CW_Ising_Figure} (b-c). We find that BRAIN recovers the uniformly-aligned ground state (all green/all yellow), while MCMC fails. 

BRAIN's remarkable \textit{noise resilience} extends up to 9\% noise, achieves near-optimal magnetization and accurate  identification of $T_c$, see Figure \ref{CW_Ising_Figure}(c).
BRAIN also exhibits moderate degradation even at extreme 40\% noise, exhibiting only a 20\% degradation in magnetization, whereas MCMC suffers a severe 75\% loss in fidelity only at 9\% noise.
We note that expanding MCMC to noise-aware multiple energy averaging also performs poorly, see Appendix \ref{noise_degradation}, with 50 averages per sample necessary to increase the fidelity to 95\%.
This unprecedented performance under severe measurement uncertainty stems from BRAIN's policy gradient approach, aggregating noisy measurements into gradient updates.

Figure \ref{CW_Ising_Figure}(d) quantitatively compares \textit{time to solution} under 3\% noise conditions at T = 0.33, revealing a critical computational advantage of BRAIN's distribution learning approach. BRAIN reaches optimal solutions within $10^5$ energy evaluations, while MCMC remains trapped in suboptimal states even after $5\times10^6$ evaluations with $36\times$ more energy evaluations. 
We quantify the computational advantage of BRAIN over MCMC in Table \ref{table_MCMCvsBRAIN} as a function of increasing noise in the energy evaluations. 
This computational advantage arises from BRAIN's lightweight $\mathcal{O}(N)$ parameterization, enabling rapid updates with negligible digital overhead and being compatible with nano-to-micro second-timescale AIMs.

Figure \ref{CW_Ising_Figure}(e) highlights a key aspect of BRAIN's adaptive sampling strategy - maintaining high solution \textit{fidelity} under increasing noise requires a larger number of samples.
We observe a quadratic scaling in the additional samples required as the Gaussian noise level ($\epsilon$) increases. The data are well fitted by a second-order polynomial $ y = a\epsilon^2+b\epsilon+c$ where $a= 339.9,
b= 416.7, \& \ c= 1190.5$. 
At 1\% noise, BRAIN requires approximately $1.5\times$ more energy evaluations than the noiseless case. This overhead increases to $2\times$ at 3\% noise and $3\times$ at 6\% noise.
Appendix \ref{sample_size} documents ablation on sample size.
This scaling reflects the fundamental statistical requirement: higher noise demands more samples to achieve comparable signal-to-noise ratios in the gradient estimates.
Importantly, this overhead remains tractable because BRAIN’s $\mathcal{O}(N)$ update complexity scales favorably compared to the $\mathcal{O}(2^{N})$ configuration space that MCMC must explore, enabling efficient noise averaging while maintaining real-time operation.

\begin{table}[t]
\centering
\caption{BRAIN v/s MCMC - 1024 spin system (T = 0.33)}
\renewcommand{\arraystretch}{1.5} 
\resizebox{\columnwidth}{!}{
\begin{tabular}{|c|c|c|c|c|c|c|}
\hline
& \multicolumn{2}{c|}{\textbf{Fidelity}} & \textbf{Gain} & \multicolumn{2}{c|}{\textbf{Time to reach MCMC solution}} & \textbf{Acceleration} \\
\cline{2-3} \cline{5-6}
\textbf{Noise} & \textbf{MCMC} & \textbf{BRAIN} & \textbf{BRAIN/MCMC} & \textbf{MCMC} & \textbf{BRAIN} & \textbf{MCMC/BRAIN} \\
\hline
3\% & 51\% & 98\% & 1.9$\times$ & $1.0 \times 10^6$ & $5.2 \times 10^3$ & 192$\times$ \\
\hline
6\% & 37\% & 98\% & 3.3$\times$ & $1.1 \times 10^6$ & $4.5 \times 10^3$ & 244$\times$ \\
\hline
9\% & 29\% & 98\% & 3.9$\times$ & $1.0 \times 10^6$ & $3.5 \times 10^3$ & 285$\times$ \\
\hline
12\% & 27\% & 98\% & 4.5$\times$ & $0.9 \times 10^6$ & $2.4 \times 10^3$ & 408$\times$ \\
\hline
\end{tabular}
}
\label{table_MCMCvsBRAIN}
\end{table}

\begin{figure*}[t!]
\begin{center}
\includegraphics[width=\linewidth]{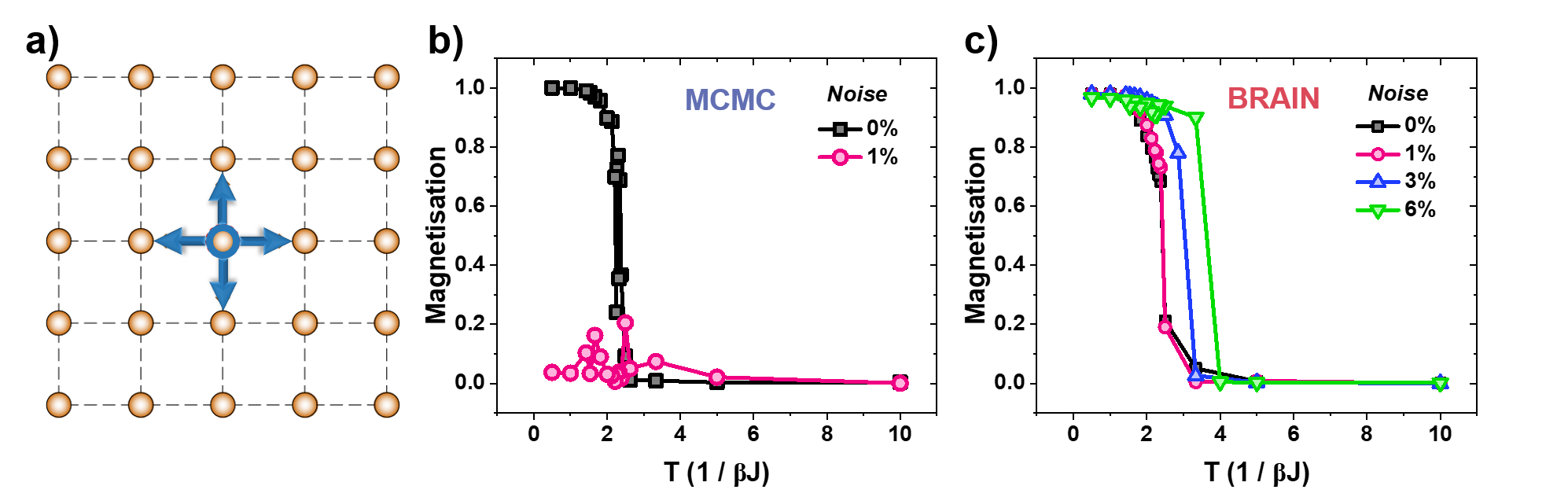}
\end{center}
\caption{(a) Schematic representing a 2D square array of spins emulating the classic Ising Hamiltonian with nearest-neighbor interactions (arrows). Temperature-dependent magnetization, comparing MCMC (b) and BRAIN (c) under increasingly noisy energy evaluations.}
\label{2D_Ising_Figure}
\end{figure*}

The scalability analysis, Figure \ref{CW_Ising_Figure}(f), demonstrates BRAIN's remarkable performance across system sizes ranging from 32×32 to 256×256 (65,536 spins), while revealing important trade-offs between noise tolerance and computational cost.
Under noiseless conditions, converged magnetization remains constant across all system sizes, indicating dimension-independent \textit{fidelity} (green solid line).
However, under 3\% noise, performance degrades modestly with increasing system size, with the magnetization dropping to 0.85 for the largest system, representing a 15\% deviation from the theoretical optimum (green dashed line).
Additionally, the \textit{time to solution}, measured in the of number of energy evaluations, exhibits distinct scaling behavior: $63.6N^{0.94}$ in the noiseless case (brown solid line) and $0.2N^{1.55}$ with 3\% noise (brown dashed line).
Note that scaling noisy evaluations assumes that we are converging to a fidelity of 0.85 while the noiseless scaling converges to a fidelity of 0.98.

Consequently, the noise-induced overhead increases systematically with system size, from 25\% additional evaluations at 1,024 spins to 100\% at 65,536 spins, reflecting the fundamental challenge of signal discrimination in high-dimensional noisy energy landscapes. This $\mathcal{O}(N^{1.55})$ scaling under noise remains polynomial and therefore compares favorably to the $\mathcal{O}(2^{N})$ complexity of exact partition function methods, demonstrating that BRAIN retains practical scalability even in noisy regimes relevant to AIMs.

\subsubsection{Lenz–Ising model: Generalization Beyond Mean-Field Ising Networks}
To demonstrate the generality of BRAIN beyond fully-connected topologies, we evaluate performance on the 2D nearest-neighbor Ising model, also known as the Lenz-Ising model, Figure \ref{2D_Ising_Figure} (a). This system serves as a foundational benchmark for NP-hard combinatorial optimization, with an energy defined by local interactions: $H = -J\sum_{i,j \in N(i)} \sigma_i \sigma_j$.
Figures \ref{2D_Ising_Figure}(b) and (c) confirm excellent agreement in \textit{fidelity} between BRAIN and MCMC under noiseless conditions, with both methods accurately capturing the critical temperature $T_c \approx 2.2$ predicted by Onsager's exact solution \cite{aharoni2000introduction}.
This validates that BRAIN reproduces thermodynamic phase transition behavior in systems with local spatial correlations, achieving accuracy comparable to established neural samplers \cite{wu2019solving}.
Notably, although the fully-factorized Bernoulli distribution assumes independence from spin topology, it successfully captures these phase transition properties because BRAIN treats the energy evaluation as a non-differentiable quantity rather than a differentiable, analytic quantity as used in naive mean-field approaches \cite{wu2019solving}.
This approach allows gradients to aggregate information from the true reward (energy) without making assumptions about the underlying topology or interactions between spins.

Under noisy conditions, the algorithms again show divergent behavior.
Figure \ref{2D_Ising_Figure}(b) reveals MCMC's complete breakdown in \textit{fidelity} even at 1\% noise, trapping the system in a fully disordered state ($M \approx 0$) across all temperatures, despite strong ordering expected at low T.
The noise renders the Metropolis acceptance criterion essentially random.
Figure \ref{2D_Ising_Figure}(c) demonstrates BRAIN's \textit{noise resilience} across noise levels from 0\% to 6\%.
At these noise levels, the algorithm maintains the characteristic magnetization-temperature curve and successfully identifies phase transitions.
However, a systematic trend emerges: the apparent $T_c$ shifts to higher values with increasing noise level at the rate of $\partial T_c/\partial \epsilon \approx 8.48J$. 

Along with the \textit{fidelity}, we also evaluate the \textit{sampling efficiency} of BRAIN at different noise levels using the effective sample size (ESS) metric, see Appendix \ref{ESS}.
BRAIN achieves an ESS of 0.005, which represents a 10$\times$ improvement over the best traditional MCMC sampler, validating our variational approach for discrete noiseless sampling.

\section{Conclusions and Outlook}
We have introduced BRAIN (Boltzmann Reinforcement for Analog Ising Networks), an algorithm that transforms the optimization of analog Ising machines (AIMs) from a state-to-state sampling problem into a noise-robust distribution learning task.
By leveraging policy gradients, BRAIN successfully bridges the gap between high-speed, noisy analog hardware and stable, scalable combinatorial optimization.

BRAIN maintains 98\% ground state fidelity under realistic 3\% Gaussian measurement noise, a regime where traditional MCMC methods degrade to 51\% \textit{fidelity}, thus showing \textit{noise resilience}.
BRAIN reaches MCMC-equivalent solutions 192x to 408x faster under varying noise levels, showing significant computational acceleration by leveraging an $\mathcal{O}(N)$ variational parameterization.
BRAIN demonstrates excellent \textit{scalability}, showing $\mathcal{O}(N^{1.55})$ scaling for systems up to 65,536 spins, remaining practical compared to the exponential complexity of exact methods.
Our algorithm \textit{generalizes} across diverse topologies, including all-to-all coupled Curie-Weiss models and NP-hard 2D nearest-neighbor Ising networks.
Finally, BRAIN features only 256 trainable parameters (for a 16x16 spin system), achieving \textit{real-time latency} of 0.052 seconds, see Appendix \ref{model_size}, ensuring that BRAIN does not bottleneck the nano-to-micro second AIMs.

Despite its advantages, BRAIN's current implementation has specific limitations that provide a roadmap for future refinement.
\textit{Expressivity in Frustrated Systems:} The use of a fully-factorized Bernoulli distribution assumes independence from spin topology, which may limit performance in complex landscapes.
Future work will incorporate more expressive architectures to capture higher-order correlations while maintaining real-time control constraints.
\textit{Non-Gaussian Hardware Realities:} While robust to Gaussian noise, physical hardware may exhibit systematic biases or temporal correlations.  We aim to develop robust policy gradient variants that utilize temporal filtering and non-parametric noise modeling to account for device-specific imperfections.

The underlying principle, policy gradients driven by noisy oracle feedback, naturally extends to Variational Quantum Algorithms \cite{Cerezo_2021} where measurement noise is a primary bottleneck. BRAIN establishes a foundation for utilizing physical computing architectures without the need for noise-free environments or exact energy evaluations.

\section*{Acknowledgments}
This work was supported by an Laboratory Directed Research \& Development (LDRD) project. This work was done at the Center for Integrated Nanotechnologies, an Office of Science user facility operated for the U.S. Department of Energy. This article has been authored by an employee of National Technology \& Engineering Solutions of Sandia, LLC under Contract No. DE-NA0003525 with the U.S. Department of Energy (DOE). The employee owns all right, title, and interest in and to the article and is solely responsible for its contents. The United States Government retains and the publisher, by accepting the article for publication, acknowledges that the United States Government retains a non-exclusive, paid-up, irrevocable, world-wide license to publish or reproduce the published form of this article or allow others to do so, for United States Government purposes. The DOE will provide public access to these results of federally sponsored research in accordance with the DOE Public Access Plan https://www.energy.gov/downloads/doe-public-access-plan.



\appendix

\section{Appendixes}

\subsection{Characterizing the noise in the analog Ising machine}
\label{expt_noise_characterization}

Figure \ref{Figure_S4}(a) presents direct experimental evidence of energy measurement fluctuations for a fixed spin configuration in a spatial photonic Ising machine (SPIM). Figure \ref{Figure_S4}(b) shows the corresponding distribution of measured energies across multiple measurements, revealing the Gaussian noise, characteristics inherent to photonic hardware. The bell-shaped distribution confirms that measurement noise follows a normal distribution. The width of the distribution provides direct measurement of the noise standard deviation $\sigma$, which appears consistent with the 3\% relative noise level used in this work.
\begin{figure}[!ht]
\begin{center}
\includegraphics[width=\linewidth]{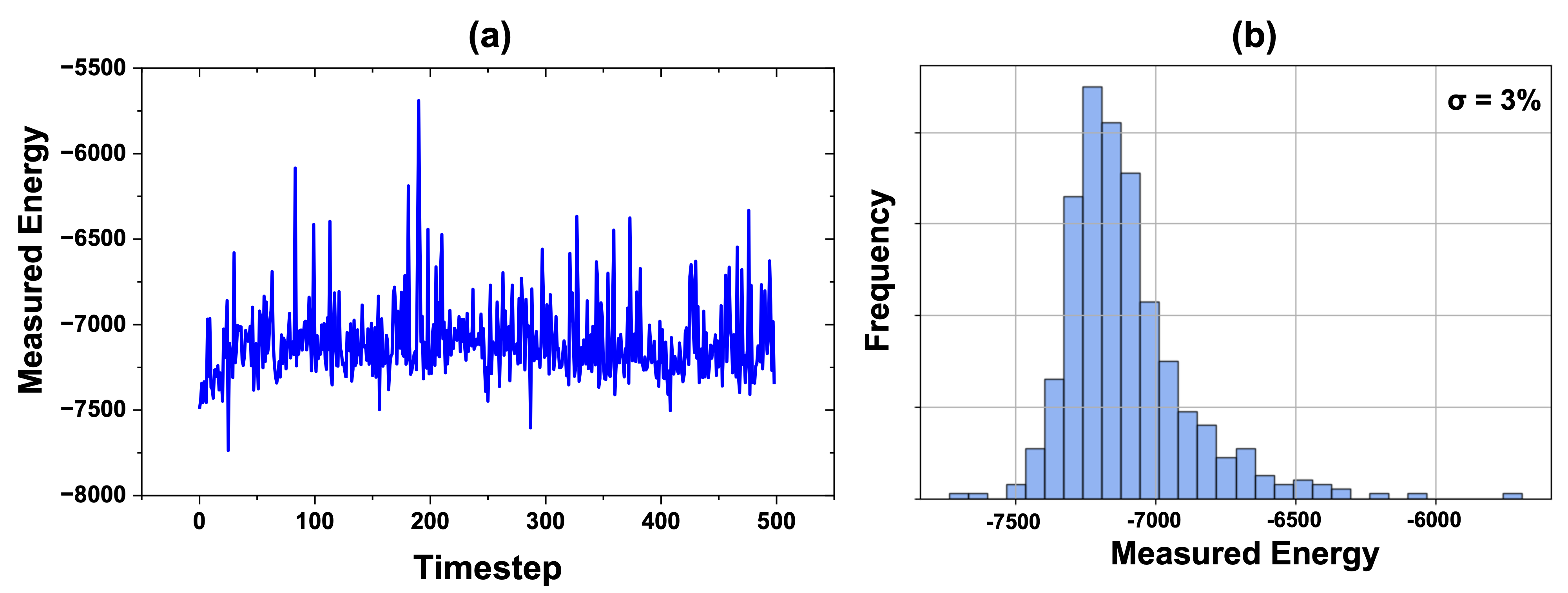}
\end{center}
\caption{(a) Fluctuation in energy observed for a fixed spin configuration using the Spatial Photonic Ising Machine (SPIM) across 500 experimental trials. (b) Histogram illustrating the distribution of measured energy values.}
\label{Figure_S4}
\end{figure}

\subsection{Markov Chain Monte Carlo approaches}
\label{mcmc_method}
Markov Chain Monte Carlo (MCMC) algorithms, such as the Metropolis-Hastings (MH) algorithm \cite{metropolis1953equation, hastings1970monte}, are a class of algorithms, designed to sample configurations from a target probability distribution $P(x)$.
MH in particular constructs a Markov chain with transition probabilities $P(x' | x)$ that satisfies detailed balance $P(x' | x) P(x) = P(x|x')P(x')$, ensuring $P(x)$ is stationary.
Factorizing the transition probabilities into a proposal probability $Q(x'|x)$ and an acceptance probability $\alpha(x',x)$, and enforcing detailed balance, for a target Boltzmann distribution $p(x) = \frac{e^{-\beta E(x)}}{\sum_x e^{-\beta E(x)}}$ \cite{metropolis1953equation}, and a symmetric (or uniform) proposal distribution, yields the familiar acceptance criterion:
\begin{align*}
\label{MH_equation}
\frac{P(x' | x)}{P(x|x')} &= \frac{Q(x'|x)\alpha(x',x)}{Q(x|x')\alpha(x,x')} = \frac{P(x')}{P(x)} \\& \alpha(x',x) = \min \Big(1, e^{-\beta (E(x') - E(x))} \Big)
\end{align*}
where $\beta$ is the inverse temperature.
While MH provides samples from the target stationary distribution, simulated annealing \cite{kirkpatrick1983optimization} gradually increases $\beta$ (lowers temperature) to concentrate samples near the global minimum, providing a stochastic global optimization procedure.
Numerous extensions of these fundamental methods have been proposed to improve sample efficiency and convergence, including Hybrid Monte Carlo \cite{duane1987hybrid}, and no U-turn sampling \cite{hoffman2014no} both of which attempt to improve acceptance rate and avoid inefficient walks in state space.
We point readers to more thorough reviews of this class of algorithms \cite{brooks2011handbook} for further background.
For the purposes of this work, we illustrate that this class of algorithms attempts to sample from a target distribution by accepting and rejecting moves in state space.

\subsection{Parallel tempering fails with noise on the Curie-Weiss Hamiltonian}
\label{parallel_temp}
We implemented parallel tempering \cite{earl2005parallel} as an advanced Monte Carlo baseline to investigate whether sophisticated sampling enhancements could overcome noise limitations in computing the Curie-Weiss Hamiltonian. The algorithm employed 30 replicas spanning temperatures from 0.33 to 2, with replica exchange attempts every 4,000 Monte Carlo steps over 400,000 total time steps per replica. While the method performed well under noiseless conditions, achieving efficient sampling across the temperature range with successful configuration exchanges between adjacent replicas (Figure \ref{Figure_PT}), it failed completely under 3\% noise conditions. All replicas remained trapped in disordered states ($M \approx 0$) despite the extended simulation time and frequent swap opportunities, with successful exchange events becoming extremely rare due to corrupted energy evaluations. This systematic failure across multiple temperature levels and extended sampling demonstrates that the fundamental limitation is not specific to simple Monte Carlo, but extends to advanced variants that still rely on single-measurement decision criteria incompatible with noisy energy evaluations. The contrast between noiseless success and noisy failure (Figure \ref{Figure_PT}) confirms that measurement uncertainty, rather than algorithmic sophistication, represents the critical bottleneck for traditional statistical mechanical approaches on analog computing platforms.
\begin{figure}[!ht]
\begin{center}
\includegraphics[width=0.8\linewidth]{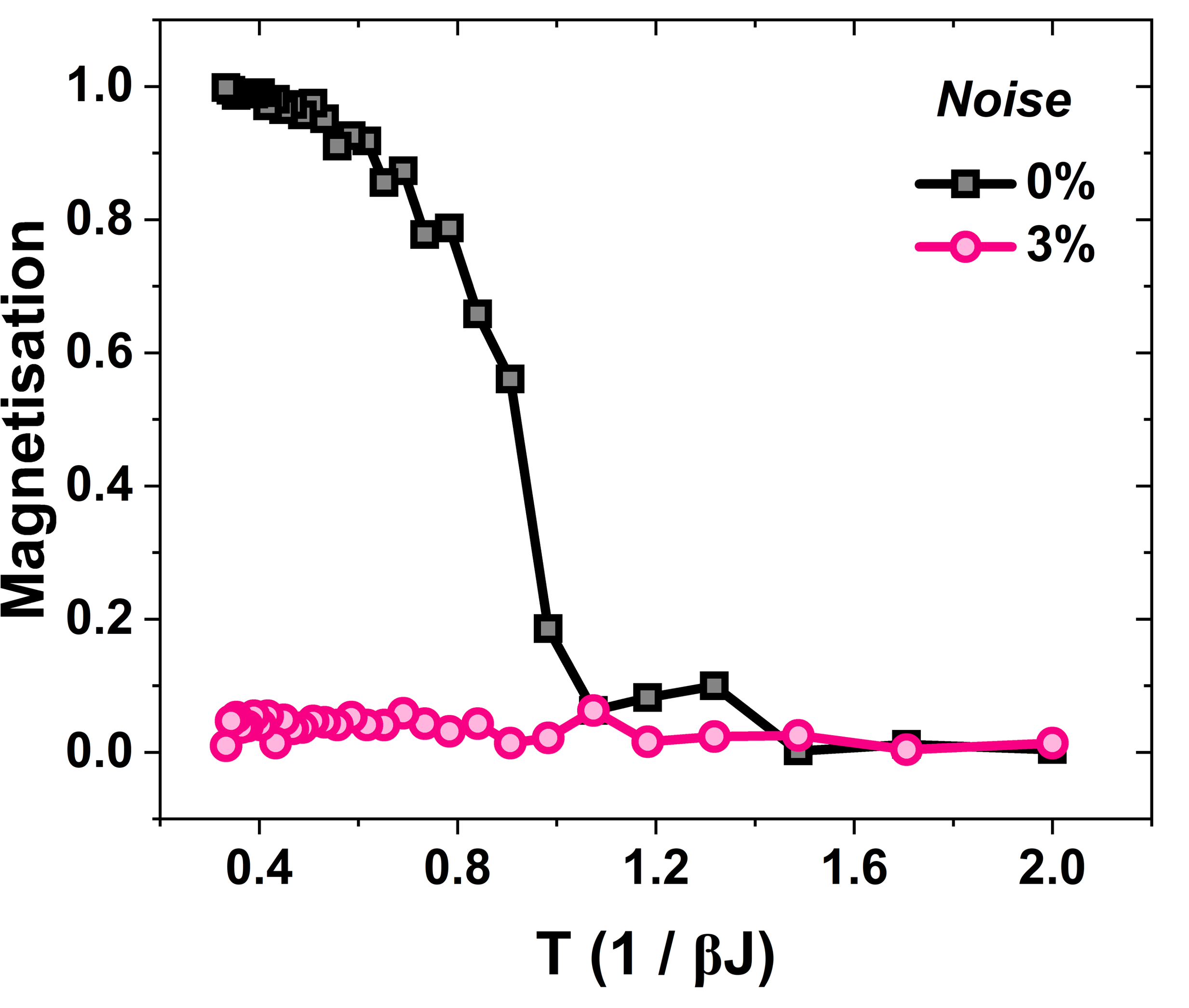}
\end{center}
\caption{Magnetization as a function of temperature for the parallel tempering algorithm, comparing performance under noiseless (0\% noise) and noisy (3\%) energy evaluations.}
\label{Figure_PT}
\end{figure}

\subsection{Gradient expressions for specific choices of $q_\theta(x)$}
\subsubsection{Bernoulli distribution}
\label{gradient_expression_bernoulli}
When $q_\theta(x)$ is the composition of independent Bernoulli distributions per spin, we can write:
\[
q_\theta(x) = \prod_{j=1}^{N^2} q_\theta(x_j), \quad q_\theta(x_j) = m_j^{\frac{1+x_j}{2}} (1-m_j)^{\frac{1-x_j}{2}}, m_j \in [0,1].
\]

Substituting this choice of $q_\theta(x)$ into our KL-based loss, we have:
\begin{align}
L(\theta) &= -H(q_\theta(x)) + \beta \, \mathbb{E}_{q_\theta}[E(x)] \nonumber \\
          &= -\sum_x \Big(\prod_j q_{\theta_j}(x_j)\Big) \log \Big(\prod_j q_{\theta_j}(x_j)\Big) + \beta \, \mathbb{E}_{q_\theta}[E(x)] \nonumber \\
          &= \sum_j \Big[ m_j \log m_j + (1 - m_j) \log (1 - m_j) \Big] + \beta \, \mathbb{E}_{q_\theta}[E(x)].
\end{align}
Given this loss function, we can efficiently learn $\theta$ using policy-gradient reinforcement learning or REINFORCE \cite{williams1992simple}. This approach is particularly useful when $E(x)$ is non-differentiable or noisy, as in measurements from an analog Ising machine, where backpropagation is not applicable.

For this choice, the gradient of the loss with respect to each $m_j$ can be computed analytically as:
\begin{equation}
\nabla_{m_j} L = \mathbb{E}_{q_\theta} \Big[ \beta E(x) \Big( \frac{x_j - (2 m_j - 1)}{2 m_j (1 - m_j)}\Big)\Big] + \log \Big(\frac{m_j}{1 - m_j}\Big)
\end{equation}

\subsubsection{Gaussian Mixture Model}
\label{gradient_expression_gmm}
For the continuous 1D benchmark (Figure 1), we parameterize the state generator $q_\theta(x)$ as a mixture of $K=2$ Gaussians:
\begin{equation}
    q_\theta(x) = \sum_{k=1}^K w_k \mathcal{N}(x; \mu_k, \sigma_k^2)
\end{equation}
where $\theta = \{w_k, \mu_k, \sigma_k\}_{k=1}^K$ are the learnable parameters, subject to $\sum w_k = 1$.
The gradient of the loss $L(\theta) = -H(q_\theta) + \beta \mathbb{E}_{q_\theta}[E(x)]$ with respect to the parameters is estimated via the score function estimator:
\begin{equation}
    \nabla_\theta L \approx \frac{1}{S} \sum_{s=1}^S \left( \beta (E(x_s) - b) \nabla_\theta \log q_\theta(x_s) \right) - \nabla_\theta H(q_\theta)
\end{equation}
where the score function $\nabla_\theta \log q_\theta(x)$ for a specific sample $x$ is given by:
\begin{align}
    \nabla_{\mu_k} \log q_\theta(x) &= \frac{w_k \mathcal{N}(x; \mu_k, \sigma_k^2)}{q_\theta(x)} \frac{(x - \mu_k)}{\sigma_k^2} \\
    \nabla_{\sigma_k} \log q_\theta(x) &= \frac{w_k \mathcal{N}(x; \mu_k, \sigma_k^2)}{q_\theta(x)} \left( \frac{(x - \mu_k)^2}{\sigma_k^3} - \frac{1}{\sigma_k} \right)
\end{align}

\subsection{Variance analysis under noise}
\label{variance_analysis}

\begin{proposition}
\label{prop:noise_variance}
Let $g^{(s)}(\theta)$ be the gradient estimator with batch size $s$, and $g^{(s,b)}(\theta)$ be the estimator with baseline subtraction $b = \frac{1}{s}\sum_{k=1}^s \tilde{E}(x_k)$. Under independent multiplicative Gaussian noise $\eta \sim \mathcal{N}(0, \sigma^2)$, the difference in noise-induced variance in the gradients is:
\[
\Delta \mathrm{Var}_{\mathrm{noise}}
= \frac{\sigma^2 \beta^2}{s^3}
\Big(\sum_{i=1}^s a_i\Big)
\Big(\sum_{j=1}^s a_j\,[2E_j^2-\overline{E^2}]\Big)
\]
where $\overline{E^2} =\frac{1}{s}\sum_{k=1}^s E(x_k)^2$, $a_k \equiv \nabla_\theta \log q_\theta(x_k)$
\end{proposition}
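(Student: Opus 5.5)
The plan is to fix the sampled configurations $x_1,\dots,x_s$ and isolate the randomness that comes only from the measurement noise. We condition on the samples, so that $E_j\equiv E(x_j)$ and $a_j\equiv\nabla_\theta\log q_\theta(x_j)$ are deterministic. The only random variables are then the i.i.d. $\eta_j\sim\mathcal N(0,\sigma^2)$, with $\tilde E_j=E_j(1+\eta_j)$. We work componentwise in $\theta$, treating $a_j$ as a scalar; for vector $\theta$ the same computation applies to each coordinate, or to the trace of the covariance. The entropy term $-\nabla_\theta H(q_\theta)$ is deterministic given $\theta$, so it contributes nothing to either variance and is dropped.

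The first step is to write both estimators as a deterministic part plus a part linear in the noise. Without a baseline,
\[
g^{(s)}=\frac{\beta}{s}\sum_j \tilde E_j a_j=\frac{\beta}{s}\sum_j E_j a_j+\frac{\beta}{s}\sum_j E_j\eta_j a_j .
\]
For the baseline version, let $A=\sum_i a_i$ and substitute $b=\frac1s\sum_k\tilde E_k$. This gives
\[
g^{(s,b)}=\frac{\beta}{s}\Big(\sum_j\tilde E_j a_j-b\,A\Big),
\]
whose noise part is $\frac{\beta}{s}\sum_j E_j\eta_j\,(a_j-A/s)$. In both cases the noise enters as a weighted sum of independent zero-mean Gaussians, so the noise-induced variances are immediate:
\[
\mathrm{Var}_{\mathrm{noise}}(g^{(s)})=\frac{\sigma^2\beta^2}{s^2}\sum_j E_j^2a_j^2,\qquad
\mathrm{Var}_{\mathrm{noise}}(g^{(s,b)})=\frac{\sigma^2\beta^2}{s^2}\sum_j E_j^2\Big(a_j-\frac{A}{s}\Big)^2 .
\]

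Next we take the difference $\Delta\mathrm{Var}_{\mathrm{noise}}=\mathrm{Var}_{\mathrm{noise}}(g^{(s)})-\mathrm{Var}_{\mathrm{noise}}(g^{(s,b)})$, defined so that a positive value means the baseline reduces variance. Expanding the square, the $a_j^2$ terms cancel, leaving
\[
\frac{\sigma^2\beta^2}{s^2}\Big(\frac{2A}{s}\sum_j a_jE_j^2-\frac{A^2}{s^2}\sum_jE_j^2\Big)
=\frac{\sigma^2\beta^2}{s^3}\Big(2A\sum_j a_jE_j^2-A^2\,\overline{E^2}\Big).
\]
We then use $A^2\overline{E^2}=A\sum_j a_j\overline{E^2}$ to factor out $A=\sum_i a_i$. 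This yields $\frac{\sigma^2\beta^2}{s^3}\big(\sum_i a_i\big)\big(\sum_j a_j[2E_j^2-\overline{E^2}]\big)$, which is the claimed expression.

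The algebra itself is routine. The main obstacle is interpretive: the statement must be pinned down precisely. Specifically, (i) "noise-induced variance" must mean the variance conditional on the samples, i.e. over $\eta$ only; (ii) the sign convention for $\Delta$ must be "no-baseline minus baseline"; and (iii) the treatment of vector $a_j$ must be fixed, componentwise or via inner products, in which case $A\,a_j$ is read as $\langle A,a_j\rangle$. Our first move is to state these conventions explicitly and check that the sign matches the claim. As a remark, we would note that the asymptotic decrease claimed in the Methods section follows by observing that $\frac1s\sum_i a_i\to\mathbb E_q[\nabla_\theta\log q_\theta]=0$ as $s$ grows.
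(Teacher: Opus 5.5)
Your proof is correct and follows the same strategy as the paper: condition on the samples, compute the variance over $\eta$ alone for both estimators, take the difference (no-baseline minus baseline), and factor out $\sum_i a_i$. The only difference is bookkeeping. The paper writes the baseline estimator's noise as $\sum_i a_i\delta_i$ with correlated centered noises $\delta_i$, so it must compute $\mathrm{Var}[\delta_i]$ and $\mathrm{Cov}[\delta_i,\delta_j]$ and reassemble a double sum; you instead move the centering onto the scores, writing the noise as $\sum_j \eta_j E_j\,(a_j - A/s)$, so the variance is a diagonal sum and the cancellation of the $a_j^2$ terms is immediate.
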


\begin{proof}
The standard estimator is defined as $g^{(s)}(\theta) = \frac{1}{s} \sum_{i=1}^s [ \beta E(x_i)(1+\eta_i) \nabla_\theta \log q_\theta(x_i) - \nabla_\theta H ]$. Since sampling and measurement noise are independent, the total variance in $g^{(s)}(\theta)$ decomposes into $\mathrm{Var}_{\mathrm{sample}} + \mathrm{Var}_{\mathrm{noise}}$. We focus on the noise component. For the standard estimator, the noise terms are independent:
\[
\mathrm{Var}_{\mathrm{noise}}[g^{(s)}] = \sum_{i=1}^s \mathrm{Var}\left[ \beta \eta_i E(x_i) \nabla_\theta \log q_\theta(x_i) \right] = \sum_{i=1}^s \sigma^2 a_i^2 E(x_i)^2
\]
where we substituted $a_i = \frac{\beta}{s} \nabla_\theta \log q_\theta(x_i)$.

For the baseline-subtracted estimator $g^{(s,b)}$, the noise term is $\sum_{i=1}^s a_i \delta_i$, where $\delta_i = \eta_i E(x_i) - \frac{1}{s}\sum_{j=1}^s \eta_j E(x_j)$ is the centered noise deviation. Due to the shared baseline, the noise terms are correlated:
\[
\mathrm{Var}_{\mathrm{noise}}[g^{(s,b)}] = \sum_{i=1}^s a_i^2 \mathrm{Var}[\delta_i] + \sum_{i \neq j} a_i a_j \mathrm{Cov}[\delta_i, \delta_j]
\]
Using the independence of $\eta_i$, the moments of $\delta_i$ are:
\begin{align*}
\mathrm{Var}[\delta_i] &= \sigma^2 E(x_i)^2 \left(1 - \frac{2}{s}\right) + \frac{\sigma^2}{s^2} \sum_{k=1}^s E(x_k)^2 \\
\mathrm{Cov}[\delta_i, \delta_j] &= -\frac{\sigma^2}{s}(E(x_i)^2 + E(x_j)^2) + \frac{\sigma^2}{s^2} \sum_{k=1}^s E(x_k)^2
\end{align*}

Subtracting $\mathrm{Var}_{\mathrm{noise}}[g^{(s,b)}]$ from $\mathrm{Var}_{\mathrm{noise}}[g^{(s)}]$ gives:
\begin{align*}
\Delta \mathrm {Var}_{\mathrm {noise}} &= \sum_{i=1}^s a_i^2 \sigma^2 E(x_i)^2 - \left[ \sum_{i=1}^s a_i^2 \mathrm{Var}[\delta_i] + \sum_{i \neq j} a_i a_j \mathrm{Cov}[\delta_i, \delta_j] \right] \\
&= \frac{\sigma^2}{s} \sum_{i=1}^s \sum_{j=1}^s a_i a_j (E(x_i)^2 + E(x_j)^2) \\&- \frac{\sigma^2}{s^2} \left(\sum_{k=1}^s E(x_k)^2\right) \left( \sum_{i=1}^s a_i \right)^2
\end{align*}
Factoring the double sum as $2(\sum a_i)(\sum a_i E(x_i)^2)$ and simplifying yields:
\[
\Delta \mathrm {Var}_{\mathrm {noise}} = \frac{\sigma^2}{s} \left( \sum_{i=1}^s a_i \right)
\left( \sum_{j=1}^s a_j \left[ 2 E(x_j)^2 - \overline{E^2} \right] \right)
\]
Substituting $a_i = \frac{\beta}{s} \nabla_\theta \log q_\theta(x_i)$ back into this expression yields the result in Proposition \ref{prop:noise_variance}. This result implies that while the baseline does not strictly reduce noise variance for every batch (as $\sum a_j \approx 0$ fluctuates), it effectively mitigates the variance contribution from the magnitude of $E(x)$.
Furthermore, with asymptotically large batch sizes, the change in variance with noise goes to zero, meaning that the change in variance is only due to batch size sampling, which we know reduces the variance in gradients.
\end{proof}

\subsection{Validating BRAIN on noiseless systems}

\subsubsection{Low-dimensional energy landscapes}
\label{noiseless_benchmarks_lowd}

\label{A1}
\begin{figure}[h]
\begin{center}
\includegraphics[width=\linewidth]{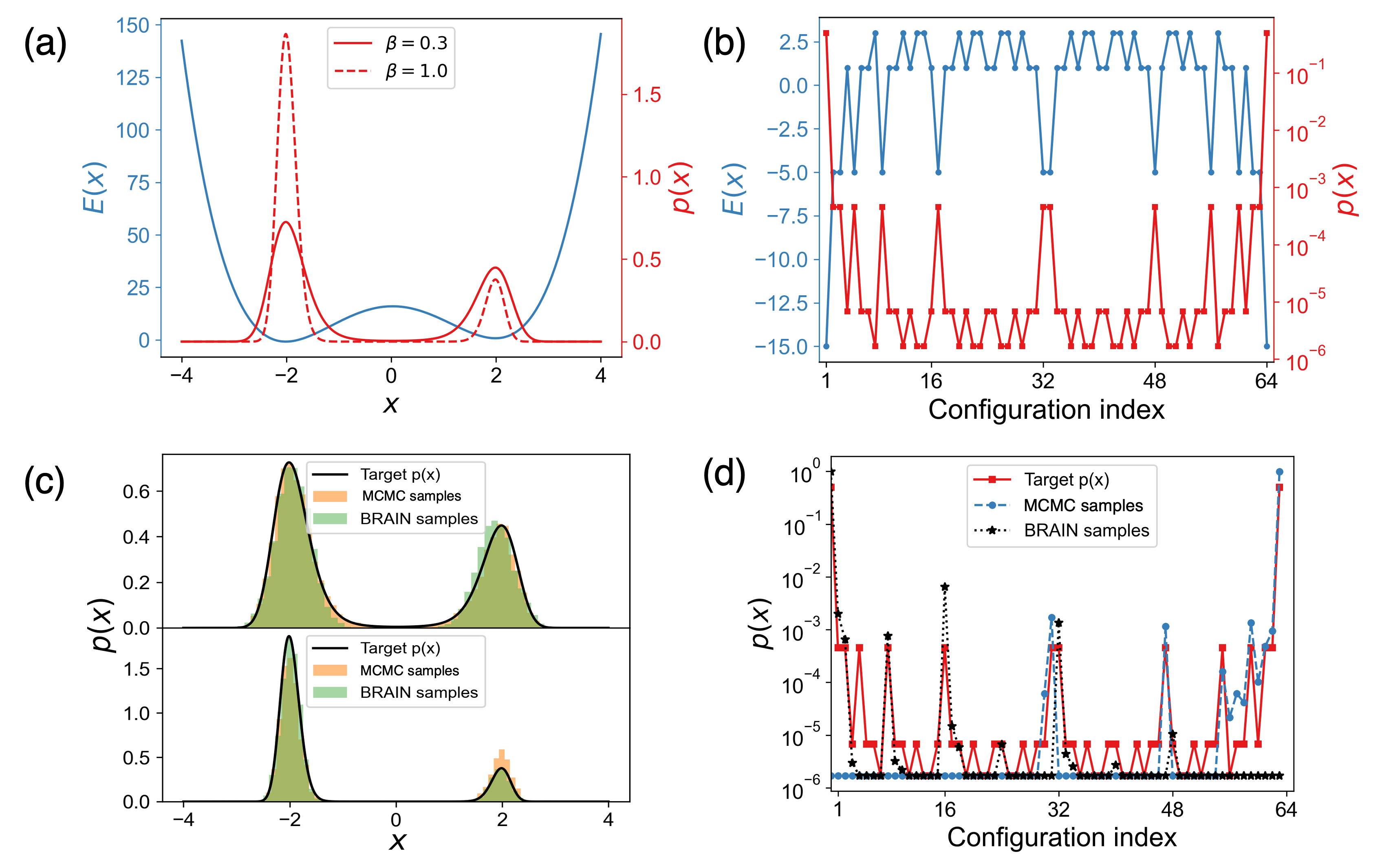}
\end{center}
\caption{(a) A noiseless double-well energy landscape $E(x)$, and associated probability density $p(x)$. (b) The energy landscape $E(x)$ and associated probability $p(x)$ for a one-dimensional six spin system. (c-d) Comparing BRAIN and MCMC to the ground truth $p(x)$ for two different temperatures.}
\label{test_case_noiseless}
\end{figure}

Fig. \ref{test_case_noiseless} shows the performance of BRAIN and MCMC on the test cases discussed in the main manuscript (see Figure \ref{test_case_noisy}).
We find that both BRAIN and MCMC are adequate at capturing $p(x)$ in the double-well energy case and one-dimensional six-spin case.

\subsubsection{High-dimensional Ising machine energy landscapes}
\label{noiseless_benchmarks_ising}

Figure \ref{Figure_S2}(a) shows the validation of BRAIN against MCMC methods for the Curie-Weiss model on a 32×32 lattice under noiseless conditions.
Both algorithms exhibit identical temperature-dependent magnetization profiles, accurately capturing the second-order phase transition at the critical temperature $T_c$ $\approx$ 0.87.
The magnetization- temperature curve confirms accurate thermodynamic behavior, with both methods achieving complete ferromagnetic ordering ($|M| = 1$) at low temperatures and complete disorder ($|M| \approx 0$) at high temperatures.
The excellent quantitative agreement across the entire temperature range validates that BRAIN correctly learns the underlying statistical mechanics, reproducing the Boltzmann distribution with the same fidelity as MCMC method.
The qualitative similarity between MCMC and BRAIN configurations at each temperature point confirms that BRAIN samples from the same statistical ensemble as MCMC (Figure \ref{Figure_S2}(b-c)), avoiding systematic biases that could lead to artificial ordering or disorder.
This ability to capture correct configurations at various temperatures is critical for applications that require accurate representation of the full Boltzmann distribution \cite{taillard2022short}, not just ground state optimization.

\begin{figure}[ht]
\begin{center}
\includegraphics[width=\linewidth]{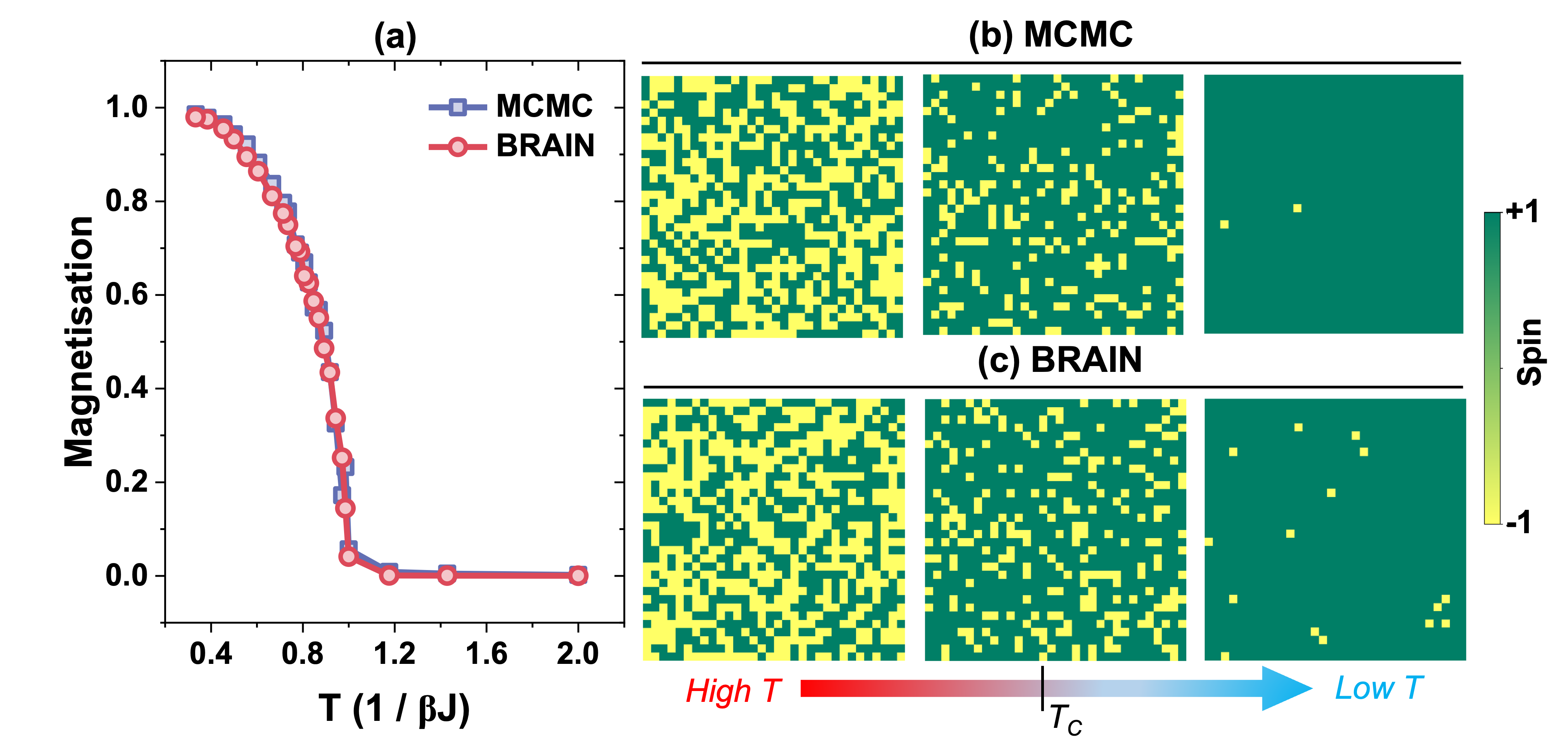}
\end{center}
\caption{(a) Temperature-dependent magnetization comparison between MCMC and BRAIN for the Curie-Weiss model on a 32×32 lattice. Representative spin configurations at three temperatures: $T>T_c$ (disordered), $T\approx T_c$ (critical fluctuations), and $T<T_c$ (ordered) for (b) MCMC and (c) BRAIN.}
\label{Figure_S2}
\end{figure}

\subsection{Ablation study I: noise resilience analysis}
\label{noise_degradation}

Figure \ref{Figure_S5} provides a systematic investigation of BRAIN performance across an extreme range of noise levels, from perfect (noiseless) conditions to severe 40\% noise that would render most algorithms completely ineffective. This analysis, conducted for the Curie-Weiss Hamiltonian, reveals BRAIN to be remarkably robust under conditions that far exceed realistic hardware specifications.
At low noise levels (up to 9\%), the algorithm maintains near-perfect performance, with magnetization-temperature curves virtually indistinguishable from the noiseless case.
The critical temperature remains accurately identified, and the transition sharpness is preserved, indicating that moderate noise levels do not pose a challenge to the algorithm.
Even under severe noise conditions (10-40\%), the algorithm continues to exhibit recognizable phase transition behavior, albeit with increased transition broadening and reduced low-temperature magnetization. The persistence of ordered phases under such extreme conditions demonstrates the fundamental robustness of BRAIN.
\begin{figure}[!htp]
\begin{center}
\includegraphics[width=\linewidth]{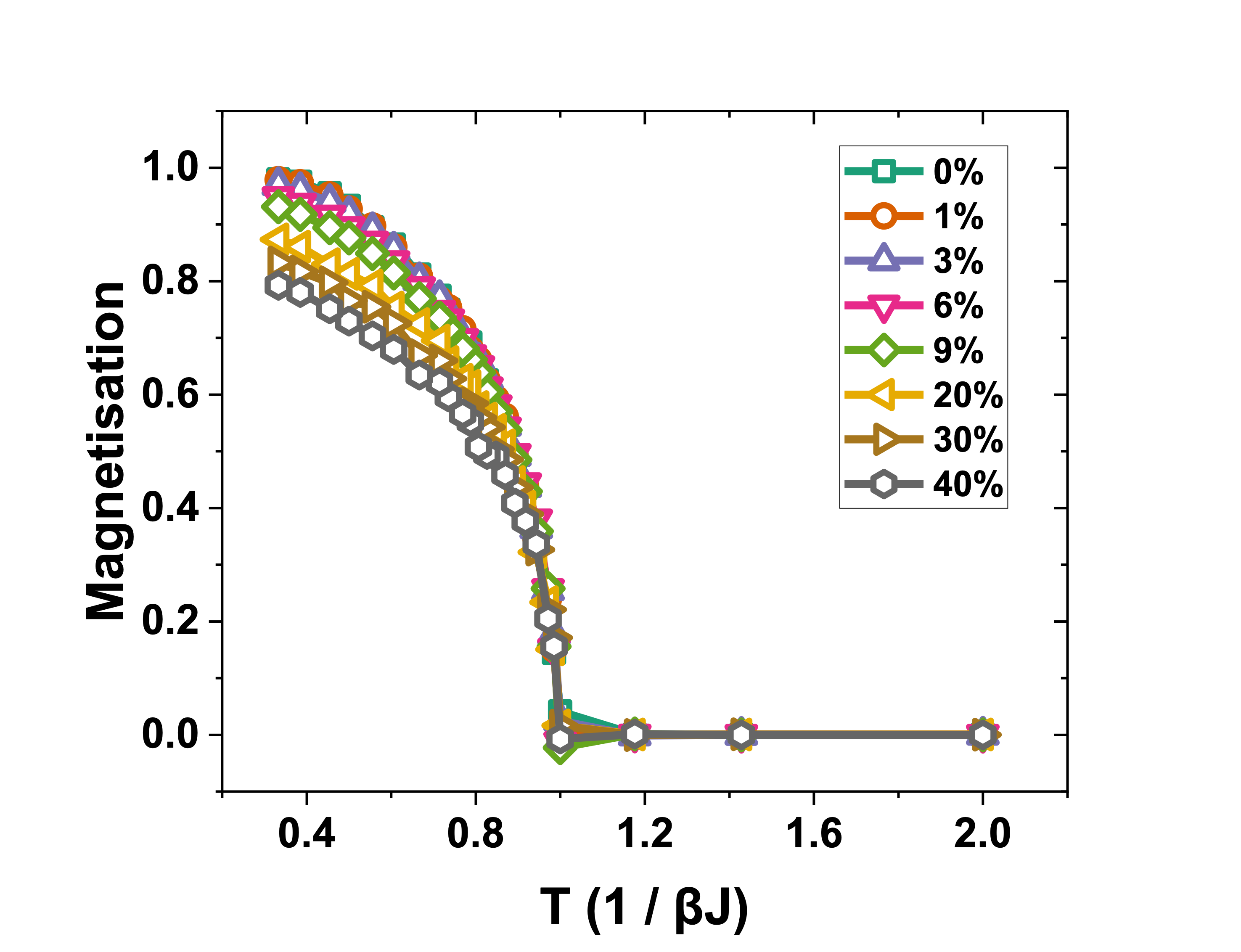}
\end{center}
\caption{Temperature-dependent magnetization profile for BRAIN under various noise levels ranging from 0 to 40\% for a 32x32 system}
\label{Figure_S5}
\end{figure}

Figure \ref{Figure_S6} provides a direct quantitative comparison between MCMC and BRAIN as a function of noise level in Curie-Weiss Hamiltonian at low temperature (T = 0.33).
The contrasting behavior of the two algorithms reveals fundamental differences in their noise handling capabilities. The MCMC curve shows catastrophic degradation even at minimal noise levels.
At just 1\% noise, the magnetization drops quickly from the theoretical optimum of $|M| = 1$ to approximately $|M| = 0.5$, representing a 50\% loss in optimal solution.
Further increases in noise level cause continued degradation, with the algorithm essentially failing to distinguish between random and ordered configurations at higher noise levels.
\begin{figure}[!ht]
\begin{center}
\includegraphics[width=\linewidth]{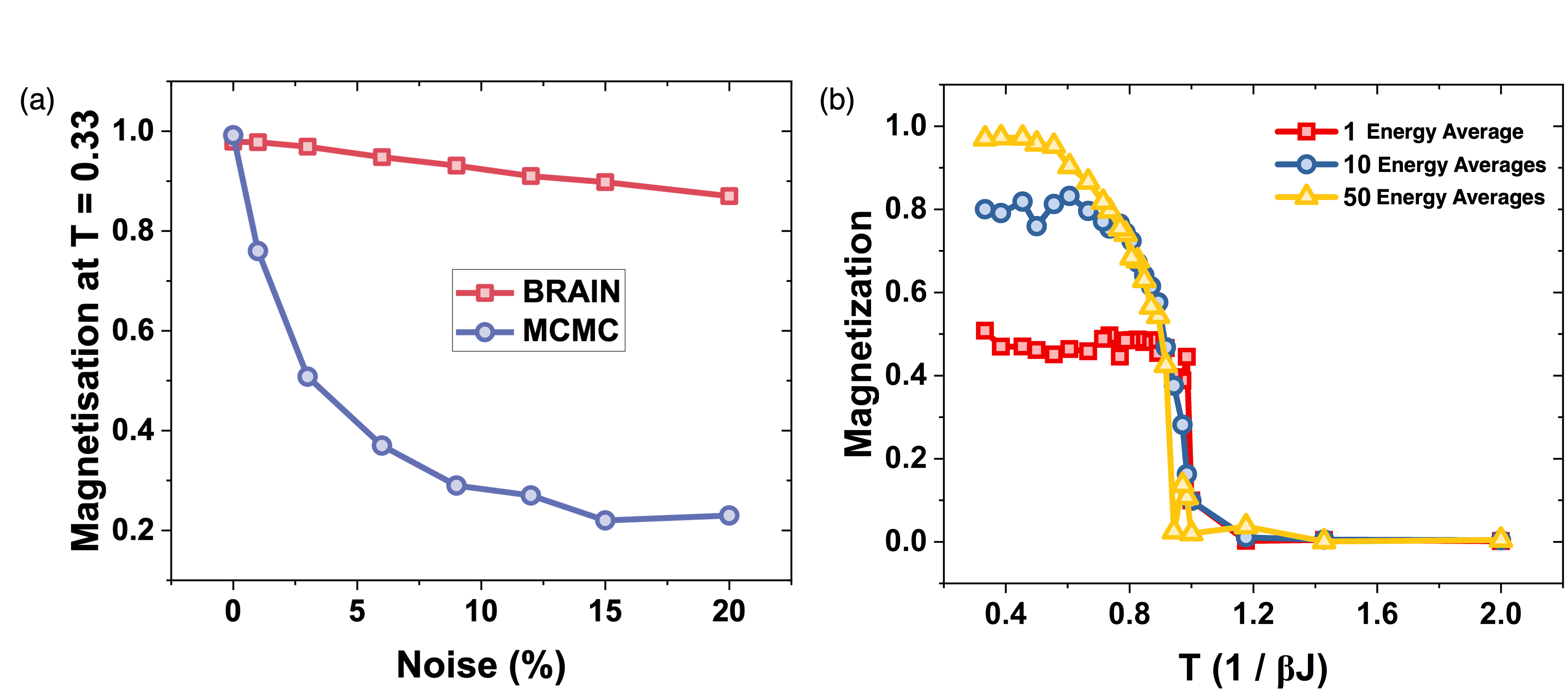}
\end{center}
\caption{(a) Comparing converged magnetization at low temperature (T=0.33) as a function of noise for MCMC and BRAIN. (b) Magnetization vs temperature with MCMC and varying levels of energy averaging to reduce noise.}
\label{Figure_S6}
\end{figure}
In contrast, the BRAIN maintains robust performance across the entire noise range. Even at 20\% noise, BRAIN achieves magnetization values above 0.8, representing less than 20\% degradation from the optimal solution.

\subsection{Ablation study II: optimal sample size}
\label{sample_size}
Figure \ref{Figure_S3} investigates the relationship between sample size and optimality for the Curie-Weiss model at low temperature (T = 0.33) on a 32×32 system. For very small sample sizes (n \textless 1,000), the algorithm shows suboptimal convergence, achieving magnetizations significantly below the theoretical optimum of $|M| = 1$. This poor performance reflects insufficient sampling for accurate gradient estimation in the BRAIN algorithm - the policy gradient estimates become too noisy to drive effective learning when based on too few samples.
Then with an increase in sample size, magnetization reaches the optimal value. For n \textgreater 1000, the curve plateaus near the theoretical optimum, indicating convergence to the ground state.
\begin{figure}[h]
\begin{center}
\includegraphics[width=\linewidth]{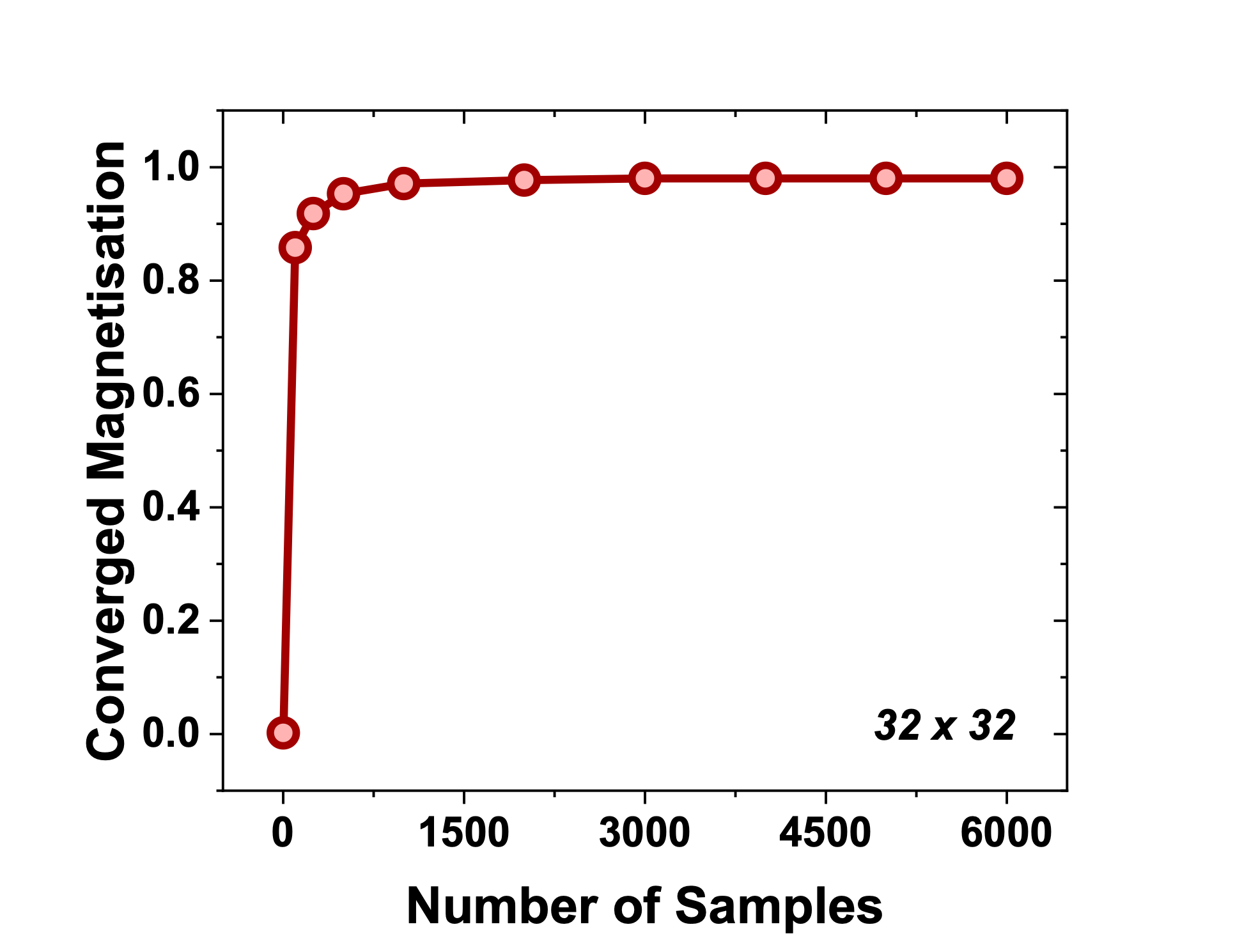}
\end{center}
\caption{Effect of number of samples on the optimality of Curie-Weiss solution at T = 0.33 for a 32x32 system.}
\label{Figure_S3}
\end{figure}

\subsection{Effective sample size comparison at critical temperature}
\label{ESS}
We evaluate the sampling efficiency of BRAIN against state-of-the-art sampling methods using the effective sample size (ESS) metric on the 2D Ising model at critical temperature ($\beta = 0.4407$). 

\begin{figure}[!ht]
\begin{center}
\includegraphics[width=\linewidth]{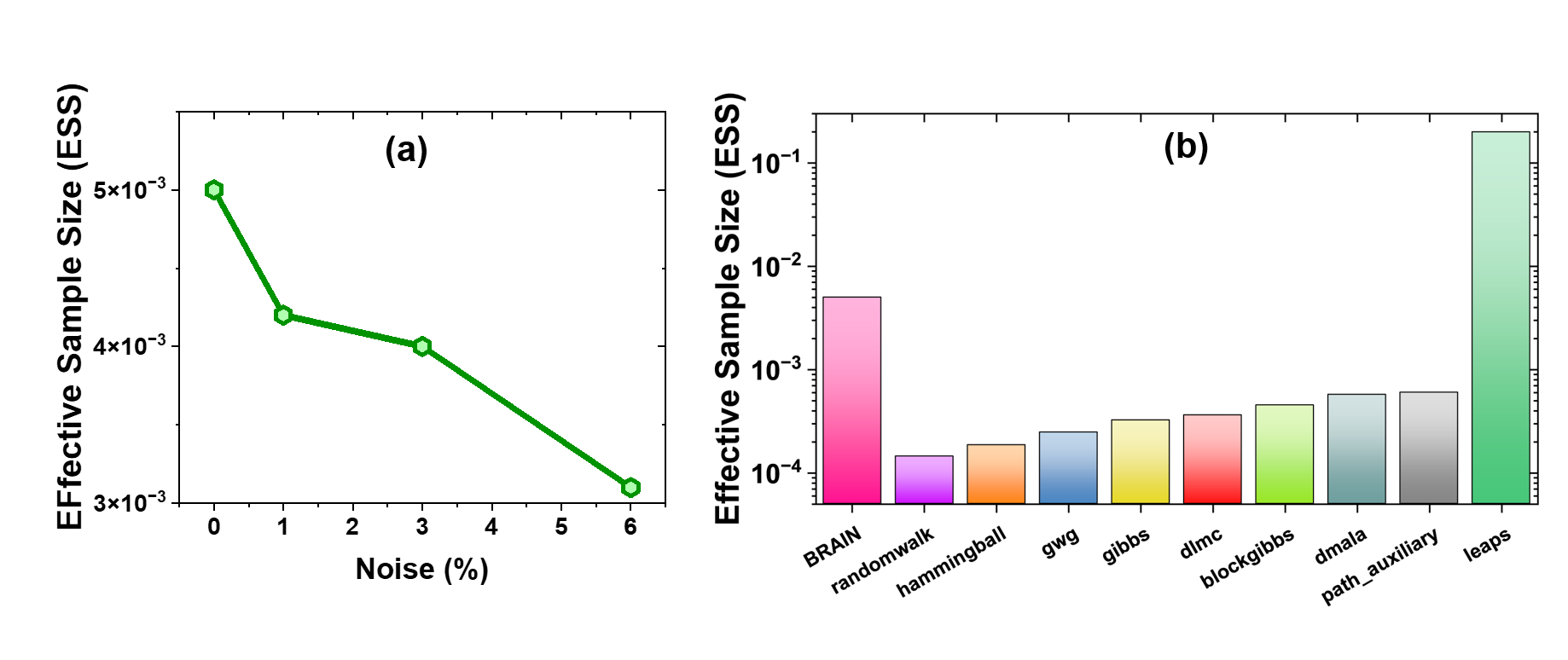}
\end{center}
\caption{Effective sample size (ESS) comparison across samplers. All methods are evaluated at the critical temperature.}
\label{Figure_ESS}
\end{figure}

We compare ESS of BRAIN against two categories of baselines: (1) traditional MCMC samplers from the DISCS benchmark \cite{goshvadi2023discs}, including Random Walk, Hamming Ball, Gibbs Within Gibbs (GWG), standard Gibbs, Discrete Langevin Monte Carlo (DLMC), Block Gibbs, and Discrete MALA (DMALA); and (2) modern learned samplers, including Path Auxiliary and LEAPS. All these methods are evaluated at critical temperature, representing the most challenging sampling regime where correlation lengths diverge, see Figure \ref{Figure_ESS}.

Traditional MCMC samplers exhibit extremely low ESS values, ranging from $1.44 \times 10^{-4}$ (Random Walk) to $5.71 \times 10^{-4}$ (DMALA).
Path Auxiliary achieves marginally better performance (ESS = $5.99 \times 10^{-4}$). LEAPS demonstrates a substantial improvement with ESS = 0.2 in their original work. This represents approximately a 350$\times$ improvement over the best MCMC baseline. On the other hand, BRAIN achieves an ESS of 0.005, which represents a 10$\times$ improvement over the best traditional MCMC sampler (DMALA), validating our variational approach for discrete sampling.
However, BRAIN's ESS is 100$\times$ smaller than LEAPS.
Such a significant gap between LEAPS and BRAIN is attributed to their fundamental methodological differences, which result in a mismatch between the learned  distribution and the Boltzmann target.
LEAPS, with an expressive diffusion model, learns a distribution $q_\theta(x)$ that is significantly closer to Boltzmann, compared to BRAIN. 
Samples from regions where $q_\theta(x)$ assigns high probability but the target (Boltzmann) assigns low probability receive small importance weights, leading to weight concentration, and thus inefficient sampling. 
However, we further document that our ESS drops from 0.005 to 0.003 with 6\% noise.
The reduction in ESS of other algorithms under noise is yet to be documented.

\subsection{Model size and latency comparison}
\label{model_size}
\begin{table}[h]
\centering
\caption{Comparison of model architectures and computational requirements for a 16×16 lattice\\}
\begin{tabular}{|l|l|l|}
\hline
\textbf{Models} & \textbf{Trainable Parameters} & \textbf{Time per Training Iteration (s)} \\
\hline
LEAPS & 330,000 & 20 \\
\hline
DiffUCO & 468,090 & - \\
\hline
DIRAC & 54,080 & 1.37 \\
\hline
BRAIN & 256 & 0.052 \\
\hline
\end{tabular}
\label{tab:models}
\end{table}

Table~\ref{tab:models} presents a comparison of four different models in terms of their latency, as evaluated by their architectural complexity and computational efficiency.
All experiments were conducted on CPU-only high-performance computing (HPC) environment with 2 Intel Xeon E5-2695 v4 CPUs (36 cores per node, 72 threads with hyperthreading) and 512 GB RAM. 
No GPU acceleration was provided, ensuring a fair comparison relevant to AIMs across models.

DiffUCO \cite{sanokowski2025scalablediscretediffusionsamplers,sanokowski2024diffusion} employs the largest architecture with 468,090 trainable parameters using a UNet-based diffusion model.
LEAPS \cite{holderrieth2025leaps} utilizes a dual neural network architecture comprising 330,000 trainable parameters.
Training was conducted with a batch size of 200 and 200 Markov Chain Monte Carlo (MCMC) steps per iteration, requiring approximately 20 seconds.
DIRAC  \cite{fan2023searching} implements a graph neural network (GNN) with 54,080 trainable parameters within an RL framework.
This model was trained with a batch size of 16, and each training iteration required approximately 1.37 seconds.
Overall, BRAIN represented the most parameter-efficient approach with only 256 trainable parameters, achieving the fastest training time of 0.052 seconds per iteration. This minimal architecture demonstrates that effective optimization can be achieved without the computational overhead of larger networks.

\section*{References}
\nocite{*}
\bibliography{aipsamp}

\end{document}